 \let\OldStatex\Statex
 \renewcommand{\Statex}[1][0]{%
   \setlength\@tempdima{\algorithmicindent}%
  \OldStatex\hskip\dimexpr#1\@tempdima\relax}
\algnewcommand\algorithmicinput{\textbf{Input:}}
\algnewcommand\Input{\item[\algorithmicinput]}
\algnewcommand\algorithmicoutput{\textbf{Output:}}
\algnewcommand\Output{\item[\algorithmicoutput]}
\newcommand{\bdmath}{\begin{dmath}}
\newcommand{\edmath}{\end{dmath}}
\newcommand{\beq}{\begin{equation}}
\newcommand{\eeq}{\end{equation}}
\newcommand{\bdm}{\begin{displaymath}}
\newcommand{\edm}{\end{displaymath}}
\newcommand{\bea}{\begin{eqnarray}}
\newcommand{\eea}{\end{eqnarray}}
\newcommand{\beal}{\beq \begin{array}{ll}}
\newcommand{\eeal}{\end{array} \eeq}
\newcommand{\beas}{\begin{eqnarray*}}
\newcommand{\eeas}{\end{eqnarray*}}
\newcommand{\ba}{\begin{array}}
\newcommand{\ea}{\end{array}}
\newcommand{\bit}{\begin{itemize}}
\newcommand{\eit}{\end{itemize}}
\newcommand{\ben}{\begin{enumerate}}
\newcommand{\een}{\end{enumerate}}
\newcommand{\calE}{{\cal E}}
\newcommand{\M}[1]{{\bm #1}} 
\renewcommand{\boldsymbol}[1]{{\bm #1}}
\newcommand{\hide}[1]{}
\newcommand{\hiddenText}{{\color{gray} hidden text.}}
\newcommand{\hideWithText}[1]{\hiddenText}
\newcommand{\normsq}[2]{\left\|#1\right\|^2_{#2}}
\newcommand{\tran}{^{\mathsf{T}}}
\newcommand{\trace}[1]{\mathrm{tr}\left(#1\right)}
\newcommand{\inv}{^{-1}}
\newcommand{\eye}{{\mathbf I}}
\newcommand{\Real}[1]{ { {\mathbb R}^{#1} } }
\newcommand{\SO}[1]{\ensuremath{\mathrm{SO}(#1)}\xspace}
\newcommand{\MA}{\M{A}}
\newcommand{\MC}{\M{C}}
\newcommand{\MK}{\M{K}}
\newcommand{\MG}{\M{G}}
\newcommand{\MM}{\M{M}}
\newcommand{\MP}{\M{P}}
\newcommand{\MQ}{\M{Q}}
\newcommand{\MU}{\M{U}}
\newcommand{\MR}{\M{R}}
\newcommand{\MS}{\M{S}}
\newcommand{\MV}{\M{V}}
\newcommand{\MF}{\M{F}}
\newcommand{\MH}{\M{H}}
\newcommand{\ML}{\M{L}}
\newcommand{\MX}{\M{X}}
\newcommand{\MY}{\M{Y}}
\newcommand{\MZ}{\M{Z}}
\newcommand{\MGamma}{\M{\Gamma}}
\newcommand{\MDelta}{\M{\Delta}}
\newcommand{\MOmega}{\M{\Omega}}
\newcommand{\vb}{\boldsymbol{b}}
\newcommand{\vdelta}{\boldsymbol{\delta}}
\newcommand{\vgamma}{\boldsymbol{\gamma}}
\newcommand{\vnu}{\boldsymbol{\nu}}
\newcommand{\sumalledges}{
     \displaystyle
     \sum_{(i,j) \in \calE}}
\newcommand{\blue}[1]{{\color{blue}#1}}
\newcommand{\linkToPdf}[1]{\href{#1}{\blue{(pdf)}}}
\newcommand{\linkToPpt}[1]{\href{#1}{\blue{(ppt)}}}
\newcommand{\linkToCode}[1]{\href{#1}{\blue{(code)}}}
\newcommand{\linkToWeb}[1]{\href{#1}{\blue{(web)}}}
\newcommand{\linkToVideo}[1]{\href{#1}{\blue{(video)}}}
\newcommand{\award}[1]{\xspace} 
\newcommand{\nrNodes}{n}
\newcommand{\barMR}{\bar{\MR}}
\newcommand{\simple}[2]{#1\in#2^\nrNodes} 
\newcommand{\rotSOdN}[1]{\simple{\MR}{\SO{#1}}}
\newcommand{\QSOdN}[1]{\simple{\MQ}{\SO{#1}}}
\newcommand{\MEAS}{\bar{\ML}}
\newcommand{\St}{\MS}
\newcommand{\lowrank}{p}
\newcommand{\myvec}{\text{vec}}
\newcommand{\Stiefel}[2]{\text{St}(#1,#2)}
\newcommand{\frob}{{\scriptsize \text{F}}}
\newcommand{\liehat}[1]{[#1]}
\newcommand{\exphat}[1]{e^{\liehat{#1}}}
\newcommand{\so}{\mathfrak{so}}
\newcommand{\vomega}{\boldsymbol{\omega}}
\newcommand{\st}{\textnormal{s.t.}\;}
\newcommand{\pMLE}{f_{\textnormal{MLE}}^*}
\newcommand{\pSDP}{f_{\textnormal{SDP}}^*}
\newcommand{\pSDPLR}{f_{\textnormal{SDPLR}}^*}
\DeclareMathOperator{\Diag}{Diag} 
\DeclareMathOperator{\Skew}{Skew}  
\DeclareMathOperator{\BDiag}{BlockDiag}
\DeclareMathOperator{\grad}{grad}
\DeclareMathOperator{\Hess}{Hess}
\newcommand{\printfnsymbol}[1]{%
  \textsuperscript{\@fnsymbol{#1}}%
}
\begin{document}

\pagestyle{headings}
\mainmatter
\def\ECCVSubNumber{5714}  

\title{Shonan Rotation Averaging:\\
Global Optimality by Surfing $SO(p)^n$}
\titlerunning{Shonan Rotation Averaging}
\author{Frank Dellaert\thanks{Equal contribution}\inst{1}\orcidID{0000-0002-5532-3566} 
\and David M.\ Rosen\printfnsymbol{1}\inst{2}
\and Jing Wu\inst{1}
\and Robert Mahony\inst{3}\orcidID{0000-0002-7803-2868}
\and Luca Carlone\inst{2}
}
\authorrunning{F. Dellaert et al.}
\institute{Georgia Institute of Technology, Atlanta, GA 
\email{\{fd27,jingwu\}@gatech.edu}
\and Massachusetts Inst. of Technology, Cambridge, MA
\email{\{dmrosen,lcarlone\}@mit.edu}
\and Australian National University, Canberra, Australia
\email{Robert.Mahony@anu.edu.au}
}

\maketitle


\begin{abstract}
Shonan Rotation Averaging is a fast, simple, and elegant rotation averaging algorithm that is guaranteed to recover globally optimal solutions under mild assumptions on the measurement noise.
Our method employs semidefinite relaxation in order to recover provably globally optimal solutions of the rotation averaging problem.  
In contrast to prior work, we show how to solve large-scale instances of these relaxations using manifold minimization on (only slightly) higher-dimensional rotation manifolds, re-using existing high-performance (but \emph{local}) structure-from-motion pipelines.
Our method thus preserves the speed and scalability of current SFM methods, while recovering \emph{globally} optimal solutions.

\end{abstract}

\section{Introduction}

\emph{Rotation averaging} is the problem of estimating a set of $\nrNodes$ unknown orientations $\MR_1, \dotsc, \MR_n \in \SO{d}$ from noisy measurements $\barMR_{ij} \in \SO{d}$ of the  \emph{relative rotations}  $\MR_i\inv \MR_j$ between them \cite{Hartley13ijcv,Govindu2001Combining}.  This problem frequently arises in geometric reconstruction; in particular, it occurs as a sub-problem in bundle adjustment \cite{Triggs00,Agarwal10eccv}, structure from motion \cite{Schonberger2016Structure}, multi-camera rig calibration \cite{Pless03cvpr}, and sensor network localization \cite{Tron2008Distributed}.  The development of \emph{fast} and \emph{reliable} algorithms for solving the rotation averaging problem is therefore of great practical interest.

While there are numerous (inequivalent) ways of formalizing the rotation averaging problem in common use \cite{Hartley13ijcv}, unfortunately all of them share the common features of (a) \emph{high dimensionality}, due to the typically large number $\nrNodes$ of orientations $\MR_i$ to be estimated, and (b) \emph{non-convexity}, due to the non-convexity of the space of rotations itself.  In consequence, \emph{all} of these approaches lead to optimization problems that are computationally hard to solve in general.

In this work, we address rotation averaging using \emph{maximum likelihood estimation}, as this 
provides strong statistical guarantees on the quality of the resulting estimate \cite{Cover91book,Hartley00}.  
We consider the maximum likelihood estimation problem:
\bea
\max_{ \rotSOdN{d} } \sumalledges \kappa_{ij} \trace{\MR_i \barMR_{ij} \MR_j\tran},
\label{eq:RA0}
\eea 
where the $\kappa_{ij} \ge 0$ are concentration parameters for an assumed Langevin noise model~\cite{Boumal14ii,Carlone15iros-duality3D,Rosen16wafr-sesync}.  Our goal in this paper is to develop a \emph{fast} and \emph{scalable} optimization method that is capable of computing \emph{globally} optimal solutions of the rotation averaging problem \eqref{eq:RA0} in practice, despite its non-convexity. 

\label{sec:ShonanRotationAveraging}

We propose a new, straightforward algorithm, \emph{Shonan Rotation Averaging}, for finding \emph{globally optimal} solutions of problem \eqref{eq:RA0}.
At its core, our approach simply applies the standard Gauss-Newton or Levenberg-Marquardt methods to a \emph{sequence} of successively \emph{higher-dimensional} rotation averaging problems
\bea
\max_{ \QSOdN{\lowrank} }
\sumalledges \kappa_{ij} \trace{\MQ_i \MP \barMR_{ij} \MP\tran \MQ_j\tran},
\label{eq:Shonan}
\eea 
for increasing $\lowrank \ge d$. Note that the only difference between \eqref{eq:RA0} and \eqref{eq:Shonan} is the $\lowrank\times d$ projection matrix $\MP \triangleq [\eye_d;0]$, which adapts the measurement matrix $\barMR_{ij}$ to the higher-dimensional rotations $\MQ_i$.  
We start by running local optimization on \eqref{eq:Shonan} with $\lowrank=d$, and if this fails to produce a globally-optimal solution, we increase the dimension $\lowrank$ and try again.  Under mild conditions on the noise, we prove that this simple approach succeeds in recovering a \emph{globally} optimal solution of the rotation averaging problem \eqref{eq:RA0}.

A primary contribution of this work is to show how the fast optimization approach developed in \cite{Rosen16wafr-sesync,Rosen17irosws-SEsyncImplementation} can be adapted to run directly on the manifold of rotations (rather than the Stiefel manifold), implemented using the venerable Gauss-Newton or Levenberg-Marquardt methods. This approach enables Shonan Averaging to be easily implemented in high-performance optimization libraries commonly used in robotics and computer vision \cite{CeresManual,Kuemmerle11icra,Dellaert12tr}.



\section{Related Work}

By far the most common approach to addressing smooth optimization problems  in computer vision is to apply standard first- or second-order nonlinear optimization methods  to compute a critical point of the objective function \cite{Nocedal99}; this holds in particular for the rotation averaging problem  (see \cite{Hartley13ijcv} generally).  This approach is very attractive from the standpoint of computational efficiency, as the low per-iteration cost of these techniques (exploiting the sparsity present in real-world problems) enables these methods to scale gracefully to very large problem sizes; indeed, it is now possible to process reconstruction problems (of which rotation averaging is a crucial part) involving \emph{millions} of images on a single machine \cite{Frahm2010Building,Heinly2015Reconstructing}.  However, this computational efficiency comes at the expense of \emph{reliability}, as the use of \emph{local} optimization methods renders this approach susceptible to  convergence to bad (significantly suboptimal) local minima.

To address this potential pitfall, several recent lines of work have studied the convergence behavior of local search techniques applied to the rotation averaging problem.  One thrust proposes various initialization techniques that attempt to start the local search in low-cost regions of the state space, thereby favoring convergence to the true (global) minimum \cite{Martinec07cvpr,Carlone15icra-initPGO3D,Boumal2013robust,Singer10achm}.  Another direction investigates the size of the locally convex region around the global minimizer, in order to understand when local search is likely to succeed \cite{Hartley13ijcv,Wilson2016rotations}.  A third class of approaches attempts to evaluate the \emph{absolute} quality of a candidate solution $\hat{\MR}$ by employing Lagrangian duality to derive an \emph{upper bound} on $\hat{\MR}$'s (global) suboptimality \cite{Fredriksson12accv,Carlone15iros-duality3D}.  Interestingly, while these last two works employ different representations for rotations (\cite{Fredriksson12accv} uses quaternions, whereas \cite{Carlone15iros-duality3D} uses rotation matrices), \emph{both} of the resulting dual problems are semidefinite programs \cite{Vandenberghe96siam}, and \emph{both} of these duals are observed to be \emph{tight} unless the measurements $\barMR_{ij}$ are contaminated by large amounts of noise; this fact enables the \emph{certification} of optimality of a global minimizer $\MR^*$.

Motivated by the striking results reported in \cite{Fredriksson12accv,Carlone15iros-duality3D}, a recent line of work proposes to recover \emph{globally} optimal solutions of the rotation averaging problem from a solution of the Lagrangian dual.  Both \cite{Fredriksson12accv} and \cite{Carlone15iros-duality3D} propose to compute such solutions using an off-the-shelf SDP solver; however, as general-purpose SDP methods do not scale well with the problem size, this approach is limited to problems involving only a few hundred states.  More recently, \cite{Eriksson18cvpr-strongDuality,Eriksson19pami_duality} proposed a block-coordinate-descent method specifically tailored to the dual of \eqref{eq:RA0}, and showed that this method was 1-2 orders of magnitude faster than the standard SDP algorithm SeDuMi \cite{Sturm1999SeDuMi}; however, the reported results were still limited to problems involving at most $\approx300$ states.  Finally, \cite{Rosen16wafr-sesync} presents a global optimization approach for pose-graph SLAM based upon the \emph{dual} of the Lagrangian dual (also an SDP), together with a fast optimization scheme that is capable of solving problems involving tens of thousands of poses in a few seconds.  However, this optimization approach uses a truncated-Newton Riemannian optimization method \cite{Absil07book} employing an \emph{exact} model Hessian, and so cannot be deployed using the Gauss-Newton framework \cite{Nocedal99} that forms the basis of standard optimization libraries commonly used in computer vision applications \cite{CeresManual,Kuemmerle11icra,Dellaert12tr}.

Our method builds on the approach of \cite{Rosen16wafr-sesync}, but adapts the optimization scheme to run directly on the manifold of rotations using the Gauss-Newton method or a trust-region variant like Levenberg-Marquardt.
In this way, it is able to leverage the availability of high-performance software libraries \cite{CeresManual,Kuemmerle11icra,Dellaert12tr} for performing \emph{local} search on problems of the form \eqref{eq:Shonan} while preserving \emph{global optimality} guarantees.  
In addition, working with the rotation manifold $\SO{p}$, for $p\geq3$ avoids introducing new, unfamiliar objects like Stiefel manifolds in the core algorithm. The result is a simple, intuitive method for globally optimal rotation averaging that improves upon the scalability of current global methods \cite{Eriksson18cvpr-strongDuality} by an order of magnitude.

\section{Gauss-Newton for Rotation Averaging}
\label{sec:Synchronization}

This section reviews how the Gauss-Newton (GN)  algorithm is applied to find a first-order critical point of the rotation averaging problem \prettyref{eq:RA0}.
%
\label{GaussNewton_for_SOd}
We first rewrite \prettyref{eq:RA0} in terms of minimizing a sum of Frobenius norms:
\bea
\min_{\rotSOdN{d} } \sumalledges 
\kappa_{ij} \normsq{\MR_j  - \MR_i \barMR_{ij}}{\frob}.
\label{eq:GN0}
\eea
This can in turn be vectorized as:
\bea
\min_{ \rotSOdN{d} } \sumalledges 
\kappa_{ij} \normsq{ \myvec(\MR_j)  - \myvec(\MR_i \barMR_{ij}) }{2}
\label{eq:GN1a}
\eea
where ``$\myvec$'' is the column-wise vectorization of a matrix, and we made use of the fact that $\normsq{\MM}{\frob} = \normsq{\myvec(\MM)}{2}.$

Problem \eqref{eq:GN1a} does not admit a simple closed-form solution.  Therefore, given a feasible point $\MR = (\MR_1, \dotsc, \MR_\nrNodes) \in \SO{d}^n$, we will content ourselves with identifying a set of tangent vectors $\vomega_1, \dotsc, \vomega_n \in \so(d)$ along which we can \emph{locally perturb} each rotation $\MR_i$: 
\bea
\MR_i \leftarrow \MR_i \exphat{\vomega_i} \label{eq:retraction}
\eea
to \emph{decrease} the value of the objective; here $\liehat{\vomega_i}$ is the $d\times d$ skew-symmetric matrix corresponding to the hat operator of the Lie algebra $\so(d)$ associated with the rotation group $\SO{d}$.  We can therefore reformulate problem \eqref{eq:GN1a} as: 
\bea
\min_{\vomega \in \so(d)^\nrNodes} \sumalledges 
\kappa_{ij} \normsq{ \myvec (\MR_j\exphat{\vomega_j}) - \myvec(\MR_i\exphat{\vomega_i} \barMR_{ij})}{2}.
\label{eq:GN2a}
\eea
In effect, equation \eqref{eq:GN2a} replaces the optimization over the \emph{rotations} $\MR_i$ in \eqref{eq:GN1a} by an optimization over the \emph{tangent vectors} $\vomega_i \in \so(d)$. This is advantageous because $\so(d)$ is a \emph{linear} space, whereas $\SO{d}$ is not.  However, we still cannot solve \eqref{eq:GN2a}  directly, as the $\vomega_i$ enter the objective through the (nonlinear) exponential map.

However, we can \emph{locally approximate} the exponential map to first order as $\exphat{\vnu}\approx\eye+\liehat{\vnu}$. 
Therefore, for any matrix $\MA$ we have
\bea
\myvec(\barMR \exphat{\vnu} \MA)
&\approx& \myvec(\barMR (\eye+\liehat{\vnu}) \MA) \label{eq:exphatlin1}\\
&=& \myvec(\barMR \MA) + \myvec(\barMR \liehat{\vnu} \MA) \\
&=& \myvec(\barMR \MA) + (\MA\tran \otimes \barMR) \myvec(\liehat{\vnu}) \label{eq:exphatlin2},
\eea
where we made use of a well-known property of the Kronecker product $\otimes$.  We can also decompose the skew-symmetric matrix $\liehat{\vnu}$ in terms of the coordinates $\nu^k$ of the vector $\vnu$
according to:
\bea
\myvec(\liehat{\vnu})) = \myvec(\sum \vnu^k \MG_k) = \sum \vnu^k \myvec(\MG_k) = \bar{\MG}_d \vnu \label{eq:sod_generators}
\eea
where $\MG_k$ is the $k$th generator of the Lie algebra $\so(d)$, and $\bar{\MG}_d$ is the matrix obtained by concatenating the vectorized generators $\myvec(\MG_k)$ column-wise.

The (local) Gauss-Newton model of the rotation averaging problem \eqref{eq:RA0} is obtained by substituting the linearizations \eqref{eq:exphatlin1}--\eqref{eq:exphatlin2} and decomposition \eqref{eq:sod_generators} into \eqref{eq:GN2a} to obtain a \emph{linear least-squares} problem in the tangent vectors $\vomega_i$:
\bea 
\min_{\vomega \in \so(d)^\nrNodes} \sumalledges 
\kappa_{ij} \normsq{ \MF_j \vomega_j - \MH_i \vomega_i - \vb_{ij}  }{2},
\label{eq:GN5a}
\eea
where the Jacobians $\MF_j$ and $\MH_i$ and the right-hand side $\vb_{ij}$ can be calculated as
\bea
\MF_j & \doteq & (\eye \otimes \barMR_j) \bar{\MG}_d  \label{eq:GNlinearization1}\\
\MH_i & \doteq & (\barMR_{ij}\tran \otimes \barMR_i) \bar{\MG}_d  \\
\vb_{ij} & \doteq & \myvec (\barMR_i \barMR_{ij} - \barMR_j). \label{eq:GNlinearization2}
\eea

The \emph{local model} problem ~\eqref{eq:GN5a} can be solved efficiently to produce an optimal \emph{correction} $\vomega^\star \in \so(d)^\nrNodes$.   (To do so one can use either direct methods, based on sparse matrix factorization, or preconditioned conjugate gradient (PCG).  As an example, to produce the results in Section \ref{Experimental_results_section} we use PCG with a block-Jacobi preconditioner and a Levenberg-Marquardt trust-region method.)  This correction is then applied to \emph{update} the state $\MR$ as in equation \eqref{eq:retraction}. Typically, this is done in conjunction with a trust-region control strategy to prevent taking a step that leaves the neighborhood of $\MR$ in which the local linear models \eqref{eq:GN5a}--\eqref{eq:GNlinearization2} well-approximate the objective. The above process is then repeated to generate a \emph{sequence} $\lbrace \vomega \rbrace$ of such corrections, each of which improves the objective value, until some termination criterion is satisfied.

We emphasize that while the Gauss-Newton approach is sufficient to \emph{locally improve} an initial estimate, because of non-convexity the final iterate returned by this method is \emph{\textbf{not}} guaranteed to be a minimizer of \eqref{eq:RA0}.



\section{Shonan Rotation Averaging}



\subsection{A convex relaxation for rotation averaging}

The main idea behind Shonan Averaging is to develop a \emph{convex relaxation} of \eqref{eq:RA0} (which can be solved \emph{globally}), and then exploit this relaxation to search for good solutions of the rotation averaging problem.  Following \cite[Sec.\ 3]{Rosen16wafr-sesync}, in this section we derive a convex relaxation of \eqref{eq:RA0} whose minimizers in fact provide \emph{exact}, \emph{globally optimal} solutions of the rotation averaging problem subject to mild conditions on the measurement noise. 

To begin, we rewrite problem \eqref{eq:RA0} in a more compact, matricized form as:
\bea
\label{eq:matricized_rotation_averaging}
\pMLE = \min_{ \rotSOdN{d} } \trace{\MEAS \MR\tran \MR},
\label{eq:QRR}
\eea 
where $\MR = (\MR_1, \dotsc, \MR_n)$ is the $d\times d \nrNodes$ matrix of rotations $\MR_i\in\SO{d}$, and $\MEAS$ is a symmetric $(d \times d)$-block-structured matrix constructed from the measurements $\barMR_{ij}$.  The matrix $\MEAS$, known as the \emph{connection Laplacian} \cite{Singer2012Vector}, is the generalization of the standard (scalar) graph Laplacian to a graph having matrix-valued data $\barMR_{ij}$ assigned to its edges.

Note that $\MR$ enters the objective in \eqref{eq:QRR} only through the product $\MR\tran \MR$; this is a rank-$d$ positive-semidefinite matrix (since it is an outer product of the rank-$d$ matrix $\MR$), and has a $(d\times d)$-block-diagonal comprised entirely of identity matrices (since the blocks $\MR_i$ of $\MR$ are rotations).
Our convex relaxation of \eqref{eq:QRR}  is derived simply by replacing the rank-$d$ product $\MR\tran \MR$ with a \emph{generic} positive-semidefinite matrix $\MZ$ having identity matrices along its $(d \times d)$-block-diagonal:
\bea
\begin{gathered}
\pSDP = \min_{\MZ \succeq 0} \trace{\MEAS \MZ} \\
\st \BDiag_{d \times d}(\MZ) = (\eye_d, \dotsc, \eye_d).
\end{gathered}
\label{eq:SDP}
\eea 

Problem \eqref{eq:SDP} is a \emph{semidefinite program} (SDP) \cite{Vandenberghe96siam}: it requires the minimization of a linear function of a positive-semidefinite matrix $\MZ$, subject to a set of linear constraints.  Crucially, since the set of positive-semidefinite matrices is a convex cone, SDPs are \emph{always} convex, and can therefore be solved \emph{globally} in practice.  Moreover, since by construction \eqref{eq:QRR} and \eqref{eq:SDP} share the same objective, and the feasible set of \eqref{eq:SDP} contains every matrix of the form $\MR\tran \MR$ with $\MR$ feasible in \eqref{eq:QRR}, we can regard \eqref{eq:SDP} as a convexification of \eqref{eq:QRR} obtained by \emph{expanding the latter's feasible set}.  It follows immediately that $\pSDP \le \pMLE$.  Furthermore, if it so happens that a minimizer $\MZ^\star$ of \eqref{eq:SDP} admits a factorization of the form $\MZ^\star = {\MR^\star}\tran \MR^\star$ with $\MR^\star \in \SO{d}^n$, then it is clear that $\MR^\star$ is also a (global) minimizer of \eqref{eq:QRR}, since it attains the lower-bound $\pSDP$ for \eqref{eq:QRR}'s optimal value $\pMLE$.  The key fact that motivates our interest in the relaxation \eqref{eq:SDP} is that this favorable situation \emph{actually occurs}, provided that the noise on the observations $\barMR_{ij}$ is not too large.  More precisely, the following result is a specialization of \cite[Proposition 1]{Rosen16wafr-sesync} to the rotation averaging problem \eqref{eq:matricized_rotation_averaging}.

\begin{theorem}
\label{Sufficient_conditions_for_exactness_theorem}
Let ${\ML}$ denote the connection Laplacian for problem \eqref{eq:matricized_rotation_averaging} constructed from the true \emph{(}noiseless\emph{)} relative rotations $\MR_{ij} \triangleq \MR_i\inv \MR_j$.  Then there exists a constant $\beta \triangleq \beta(\ML)$ \emph{(}depending upon $\ML$\emph{)} such that, if $\lVert \MEAS - \ML\rVert_2 \le \beta$:
\begin{itemize}
\item[$(i)$]  The semidefinite relaxation \eqref{eq:SDP} has a unique solution $\MZ^\star$, and
\item[$(ii)$]  $\MZ^\star = {\MR^\star}\tran \MR^\star$, where $\MR^\star \in \SO{d}^n$ is a \emph{globally optimal} solution of the rotation averaging problem \eqref{eq:matricized_rotation_averaging}.
\end{itemize}
\label{SDP_exactness_theorem}
\end{theorem}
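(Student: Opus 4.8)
\emph{Proof plan.} The idea is to exhibit a \emph{Lagrangian dual certificate} for the relaxation \eqref{eq:SDP}. First, introducing symmetric multipliers $\MLambda_i$ for the $n$ block-diagonal constraints, the dual of \eqref{eq:SDP} is $\max\sum_i\trace{\MLambda_i}$ subject to $\MQ(\MLambda)\triangleq\MEAS-\BDiag(\MLambda_1,\dots,\MLambda_n)\succeq0$; since $\MZ=\eye$ is strictly feasible, Slater's condition holds, so strong duality and complementary slackness apply. Hence a feasible point $\MZ^\star=(\MR^\star)\tran\MR^\star$ with $\MR^\star\in\SO{d}^n$ is optimal for \eqref{eq:SDP} iff there is a dual-feasible $\MLambda^\star$ with $\MQ(\MLambda^\star)(\MR^\star)\tran=0$. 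The next step is to observe that this complementary-slackness equation, using invertibility of the rotation blocks, \emph{forces} $\MLambda^\star_i=[\MEAS(\MR^\star)\tran]_i\MR^\star_i$, and that this canonical multiplier is symmetric \emph{precisely} when $\MR^\star$ is a first-order critical point of \eqref{eq:matricized_rotation_averaging} (both amount to the vanishing of the block-diagonal part of the commutator $[\MEAS,(\MR^\star)\tran\MR^\star]$). Thus to every critical point $\MR^\star$ we associate the symmetric matrix $\MQ^\star\triangleq\MEAS-\BDiag(\MLambda^\star)$, which annihilates the row space of $\MR^\star$ by construction, and $\MR^\star$ is simultaneously a global minimizer of \eqref{eq:matricized_rotation_averaging} and a witness of exactness of \eqref{eq:SDP} iff $\MQ^\star\succeq0$. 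So it suffices to certify $\MQ^\star\succeq0$ under a small-noise hypothesis.

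I would then establish the noiseless base case $\MEAS=\ML$. Here the ground-truth rotation $\MR$ satisfies $\ML\MR\tran=0$ (its rows span the ``gauge'' kernel of a consistent connection Laplacian), so the canonical multipliers all vanish and the certificate is $\ML\succeq0$ itself; moreover, for a \emph{connected} measurement graph $\ker\ML$ is \emph{exactly} this $d$-dimensional space, i.e.\ $\lambda_{d+1}(\ML)>0$. This proves exactness in the noiseless case and supplies the spectral gap that powers the perturbation argument; it also gives uniqueness, by a mechanism I would reuse below: any optimal $\MZ$ satisfies $\ML\MZ=0$ by complementary slackness, so $\mathrm{col}(\MZ)\subseteq\ker\ML$, a $d$-dimensional space spanned by the rows of $\MR$; writing $\MZ=\MR\tran\MS\MR$ with $\MS\succeq0$ and imposing $\BDiag_{d\times d}(\MZ)=(\eye_d,\dots,\eye_d)$ forces $\MS=\eye_d$, so $\MZ=\MR\tran\MR$ is the unique optimum.

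For the noisy case, I would fix a global minimizer $\MR^\star$ of \eqref{eq:matricized_rotation_averaging} (it exists by compactness of $\SO{d}^n$) and form its canonical $\MLambda^\star,\MQ^\star$. Since $\MR^\star$ is a global minimizer, the ground truth $\MR$ is feasible, and $\trace{\ML\MR\tran\MR}=0$, we get $\trace{\MEAS(\MR^\star)\tran\MR^\star}\le\trace{\MEAS\MR\tran\MR}=\trace{(\MEAS-\ML)\MR\tran\MR}\le\lVert\MEAS-\ML\rVert_2\,dn\le\beta\,dn$, hence $\trace{\ML(\MR^\star)\tran\MR^\star}\le2\beta\,dn$. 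Because $\ML\succeq0$ with $\lambda_{d+1}(\ML)>0$, this makes the principal angles between $\mathrm{row}(\MR^\star)$ and $\ker\ML$ of order $\sqrt{\beta/\lambda_{d+1}(\ML)}$, and, via the explicit formula for $\MLambda^\star$, forces $\lVert\MQ^\star-\ML\rVert_2$ to be small (of order $\sqrt{\beta}$ up to factors depending on $\lVert\ML\rVert_2$). Splitting $\Real{dn}=\mathrm{row}(\MR^\star)\oplus\mathrm{row}(\MR^\star)^\perp$: as $\MQ^\star$ is symmetric and kills $\mathrm{row}(\MR^\star)$, it is block-diagonal for this splitting with a vanishing block, and on the complementary block Weyl's inequality together with the near-alignment of $\mathrm{row}(\MR^\star)^\perp$ with $(\ker\ML)^\perp$ bounds the smallest eigenvalue below by $\lambda_{d+1}(\ML)$ minus an error that tends to $0$ with $\beta$. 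Choosing $\beta=\beta(\ML)$ small enough that this error stays below $\tfrac{1}{2}\lambda_{d+1}(\ML)$ makes $\MQ^\star\succeq0$ with $\ker\MQ^\star=\mathrm{row}(\MR^\star)$ exactly. Then $\MQ^\star$ is a valid dual certificate: $\MZ^\star=(\MR^\star)\tran\MR^\star$ is optimal for \eqref{eq:SDP} and $\pSDP=\pMLE$, which is $(ii)$; and rerunning the kernel/block-diagonal argument of the previous paragraph with $\MQ^\star$ and $\MR^\star$ in place of $\ML$ and $\MR$ yields $(i)$.

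The main obstacle is the third paragraph: turning ``small $\lVert\MEAS-\ML\rVert_2$'' into a rigorous bound on $\lVert\MQ^\star-\ML\rVert_2$ and hence on the smallest eigenvalue of the compressed certificate. The delicate points are (a) controlling how far a global minimizer of \eqref{eq:matricized_rotation_averaging} can deviate from the ground truth through the spectral gap of $\ML$ --- where graph connectivity is essential and the global rotational symmetry must be quotiented out carefully --- and (b) lower-bounding $v\tran\MQ^\star v$ on $\mathrm{row}(\MR^\star)^\perp$ despite the mismatch between that subspace and $(\ker\ML)^\perp$; both determine the dependence of the threshold $\beta$ on $\ML$. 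Everything else is convex-duality bookkeeping or a direct differentiation of the objective of \eqref{eq:matricized_rotation_averaging} on $\SO{d}^n$.
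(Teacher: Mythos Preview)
Your proposal is correct and follows essentially the same approach as the paper: the paper only sketches the argument in the Remark following Theorem~\ref{saddle_escape_theorem} (deferring details to \cite[Appendix~C]{Rosen18ijrr-sesync}), but that sketch is precisely your strategy---construct the canonical dual certificate (your $\MQ^\star$ coincides with the paper's $\MC$ in \eqref{certificate_matrix_definition} at first-order critical points), verify it is PSD with exactly $d$-dimensional kernel in the noiseless case using the spectral gap $\lambda_{d+1}(\ML)>0$, and then run a perturbation/continuity argument to show the certificate remains valid for small $\lVert\MEAS-\ML\rVert_2$. Your write-up is in fact more explicit than the paper's about the quantitative mechanism (bounding the deviation of $\MR^\star$ from the gauge orbit of $\MR$ via the optimal value, then Weyl on the compressed certificate), and you have correctly flagged the two genuinely delicate steps (a) and (b).
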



\subsection{Solving the semidefinite relaxation: The Riemannian Staircase}


In this section, we describe a specialized optimization procedure that enables the fast solution of large-scale instances of the semidefinite relaxation \eqref{eq:SDP}, following the approach of \cite[Sec.\ 4.1]{Rosen16wafr-sesync}. 

Theorem \ref{SDP_exactness_theorem} guarantees that in the (typical) case that \eqref{eq:SDP} is exact, the \emph{solution} $\MZ^\star$ that we seek admits a concise description in the factored form $\MZ^\star = {\MR^\star}\tran \MR^\star$ with $\MR^\star \in \SO{d}^n$. More generally, even in those cases where exactness fails, minimizers $\MZ^\star$ of \eqref{eq:SDP} generically have a rank $\lowrank$ not much greater than $d$, and therefore admit a symmetric rank decomposition $\MZ^\star = {\MS^\star}\tran \MS^\star$ for $\MS^\star \in \Real{\lowrank \times dn}$ with $\lowrank \ll dn$.  We exploit the existence of such low-rank solutions by adopting the approach of \cite{Burer03mp}, and replacing the decision variable $\MZ$ in \eqref{eq:SDP} with a symmetric rank-$\lowrank$ product of the form $\MS\tran\MS$.  This substitution has the effect of dramatically reducing the size of the search space, as well as rendering the positive-semidefiniteness constraint \emph{redundant}, since $\MS\tran \MS \succeq 0$ for \emph{any} $\MS$.  The resulting \emph{rank-restricted} version of \eqref{eq:SDP} is thus a standard \emph{nonlinear program} with decision variable the low-rank factor $\MS$:
\bea
\begin{gathered}
\pSDPLR(\lowrank) = \min_{\MS \in \Real{\lowrank \times dn}} \trace{\MEAS \MS\tran\MS} \\
\st \BDiag_{d \times d}(\MS\tran\MS) = (\eye_d, \dotsc, \eye_d).
\end{gathered}
\label{eq:LRSDP_NLP}
\eea 

The identity block constraints in \eqref{eq:LRSDP_NLP} are equivalent to the $\lowrank\times d$ block-columns $\St_i$ of $\St$ being orthonormal $d$-frames in $\Real{\lowrank}$. The set of all orthonormal $d$-frames in $\Real{\lowrank}$ is a matrix manifold, the \textbf{Stiefel manifold} $\Stiefel{d}{\lowrank}$ \cite{Absil07book}:
\bea
\Stiefel{d}{\lowrank} \doteq \{\MM \in\Real{\lowrank\times d} \mid \MM\tran\MM=\eye_d\}.
\label{eq:StiefelY}
\eea
We can therefore rewrite \eqref{eq:LRSDP_NLP} as a low-dimensional \emph{unconstrained} optimization over a product of $\nrNodes$ Stiefel manifolds:
\bea
\begin{gathered}
\pSDPLR(\lowrank) = \min_{\MS \in \Stiefel{d}{\lowrank}^\nrNodes} \trace{\MEAS \MS\tran\MS}.
\end{gathered}
\label{eq:Riemannian_LRSDP}
\eea 

Now, let us compare the rank-restricted relaxation \eqref{eq:Riemannian_LRSDP} with the original rotation averaging problem \eqref{eq:QRR} and its semidefinite relaxation \eqref{eq:SDP}.  Since any matrix $\MR_i$ in the (special) orthogonal group satisfies conditions \eqref{eq:StiefelY} with $\lowrank=d$, we have the set of inclusions:
\bea
\SO{d} \subset \mathrm{O}(d) = \Stiefel{d}{d} \subset \dotsb \subset \Stiefel{d}{\lowrank} \subset \dotsb
\eea
Therefore, we can regard the rank-restricted problems \eqref{eq:Riemannian_LRSDP} as a \emph{hierarchy} of relaxations (indexed by the rank parameter $\lowrank$)  of \eqref{eq:QRR} that are intermediate between \eqref{eq:QRR} and \eqref{eq:SDP} for $d < \lowrank < r$, where $r$ is the lowest rank of any solution of \eqref{eq:SDP}.  Indeed, if \eqref{eq:SDP} has a minimizer of rank $r$, then it is clear by construction that for $\lowrank \ge r$ any (global) minimizer $\MS^\star$ of \eqref{eq:Riemannian_LRSDP} corresponds to a minimizer $\MZ^\star = {\MS^\star}\tran\MS^\star$ of \eqref{eq:SDP}.  However, unlike \eqref{eq:SDP}, the rank-restricted problems \eqref{eq:Riemannian_LRSDP} are no longer convex, since we have reintroduced the (nonconvex) orthogonality constraints in \eqref{eq:StiefelY}.  It may therefore not be clear that anything has really been gained by relaxing problem \eqref{eq:QRR} to problem \eqref{eq:Riemannian_LRSDP}, since it seems that we may have just replaced one difficult (nonconvex) optimization problem with another. The key fact that justifies our interest in the rank-restricted relaxations \eqref{eq:Riemannian_LRSDP} is the following remarkable result \cite[Corollary 8]{Boumal16nips}:

\begin{theorem}
\label{BM_optimality_theorem_thm}
If $\MS^\star \in \Stiefel{d}{\lowrank}^\nrNodes$ is a rank-deficient second-order critical point of \eqref{eq:Riemannian_LRSDP}, then $\MS^\star$ is a \emph{global} minimizer of \eqref{eq:Riemannian_LRSDP}, and $\MZ^\star = {\MS^\star}\tran\MS^\star$ is a global minimizer of the semidefinite relaxation \eqref{eq:SDP}.
\label{global_optimality_thm}
\end{theorem}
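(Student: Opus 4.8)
The plan is to run the Burer--Monteiro duality argument (cf.\ \cite{Boumal16nips} and \cite[Sec.\ 4.1]{Rosen16wafr-sesync}) specialized to \eqref{eq:Riemannian_LRSDP}: build an explicit dual certificate for the SDP \eqref{eq:SDP} out of the first-order optimality data at $\MS^\star$, then use the second-order condition together with rank deficiency to show that this certificate is positive semidefinite. To set up, I would first record the Lagrangian dual of \eqref{eq:SDP}. Since the only constraints are the linear equalities $\BDiag_{d\times d}(\MZ)=(\eye_d,\dots,\eye_d)$, the dual variable is a symmetric, block-diagonal matrix $\MLambda=\BDiag(\MLambda_1,\dots,\MLambda_\nrNodes)$, and the dual problem is $\max_{\MLambda}\trace{\MLambda}$ subject to $\MEAS-\MLambda\succeq0$. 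By weak duality it then suffices to exhibit a dual-feasible $\MLambda^\star$ whose objective value $\trace{\MLambda^\star}$ equals the cost of some feasible point of \eqref{eq:SDP}; that feasible point will be $\MZ^\star\triangleq(\MS^\star)\tran\MS^\star$.

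Next I would extract $\MLambda^\star$ from first-order criticality. The Euclidean gradient of the objective $f(\MS)\triangleq\trace{\MEAS\MS\tran\MS}$ of \eqref{eq:Riemannian_LRSDP} is $2\MS\MEAS$, so vanishing of the Riemannian gradient on $\Stiefel{d}{\lowrank}^\nrNodes$ forces the $i$-th block $2[\MS^\star\MEAS]_i$ to lie in the normal space of $\Stiefel{d}{\lowrank}$ at $\MS_i^\star$; as this normal space is $\{\MS_i^\star A : A=A\tran\}$, we obtain $[\MS^\star\MEAS]_i=\MS_i^\star\MLambda_i$ with $\MLambda_i\triangleq(\MS_i^\star)\tran[\MS^\star\MEAS]_i$ necessarily symmetric. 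Setting $\MLambda^\star\triangleq\BDiag(\MLambda_1,\dots,\MLambda_\nrNodes)$ and $\MQ^\star\triangleq\MEAS-\MLambda^\star$, this reads exactly $\MS^\star\MQ^\star=0$. A short trace computation using $\MS^\star\MEAS=\MS^\star\MLambda^\star$, the feasibility $\BDiag_{d\times d}\!\left((\MS^\star)\tran\MS^\star\right)=(\eye_d,\dots,\eye_d)$, and block-diagonality of $\MLambda^\star$ gives $\trace{\MEAS(\MS^\star)\tran\MS^\star}=\trace{\MLambda^\star(\MS^\star)\tran\MS^\star}=\trace{\MLambda^\star}$, so $\MZ^\star$ and $\MLambda^\star$ have matching objective values. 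Hence the entire theorem reduces to proving $\MQ^\star\succeq0$.

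For this last step I would invoke the second-order condition, via the standard fact that at a first-order critical point the Riemannian Hessian of $f$ restricted to $\Stiefel{d}{\lowrank}^\nrNodes$ coincides with the Hessian of the Lagrangian and satisfies $\langle\dot\MS,\Hess f(\MS^\star)[\dot\MS]\rangle=2\trace{\MQ^\star\dot\MS\tran\dot\MS}$ for every tangent vector $\dot\MS$ (the curvature corrections of the constraint manifold exactly cancel the Lagrange-multiplier terms, using $\MS^\star\MQ^\star=0$); the second-order critical condition thus makes $\trace{\MQ^\star\dot\MS\tran\dot\MS}\ge0$ for all tangent $\dot\MS$. Now rank deficiency enters: since $\rank{\MS^\star}<\lowrank$, pick a unit vector $\vy\in\Real{\lowrank}$ with $(\MS^\star)\tran\vy=0$ (hence $(\MS_i^\star)\tran\vy=0$ for each $i$). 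For an arbitrary $\vxx\in\Real{d\nrNodes}$, the rank-one matrix $\dot\MS\triangleq\vy\vxx\tran$ is a valid tangent vector (each block $\dot\MS_i=\vy\vxx_i\tran$ satisfies $(\MS_i^\star)\tran\dot\MS_i=0$, so the Stiefel tangency condition holds trivially), and $\dot\MS\tran\dot\MS=\vxx\vxx\tran$. The Hessian inequality then reads $0\le2\,\vxx\tran\MQ^\star\vxx$ for all $\vxx$, i.e.\ $\MQ^\star\succeq0$. Thus $\MLambda^\star$ is dual-feasible with $\trace{\MLambda^\star}=\trace{\MEAS\MZ^\star}$, so by weak duality $\MZ^\star$ is a global minimizer of \eqref{eq:SDP}; and since every feasible $\MS$ of \eqref{eq:Riemannian_LRSDP} yields a feasible point $\MS\tran\MS$ of \eqref{eq:SDP} with the same cost, $f(\MS)\ge\pSDP=\trace{\MEAS\MZ^\star}=f(\MS^\star)$, whence $\MS^\star$ is a global minimizer of \eqref{eq:Riemannian_LRSDP} as well.

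I expect the only real obstacle to be the Hessian identity in the previous paragraph: getting the second-fundamental-form (curvature) term of the product of Stiefel manifolds --- and in particular its sign --- exactly right, so that it cancels the Lagrange-multiplier contribution and leaves the clean quadratic form $2\trace{\MQ^\star\dot\MS\tran\dot\MS}$. Everything else is weak SDP duality plus the rank-deficiency rank-one-direction trick. Since this identity and the rank-deficiency conclusion are precisely \cite[Corollary 8]{Boumal16nips}, one may alternatively just cite that result directly rather than re-derive it.
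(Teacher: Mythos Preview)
Your proposal is correct, and in fact goes further than the paper does: the paper does not supply a proof of this theorem at all, but simply invokes it as \cite[Corollary 8]{Boumal16nips}. What you have written is precisely the Burer--Monteiro/Boumal--Voroninski--Bandeira argument underlying that corollary, specialized to the block-orthogonality constraints of \eqref{eq:Riemannian_LRSDP}: extract the block-diagonal Lagrange multiplier $\MLambda^\star$ from first-order criticality, form the candidate certificate $\MQ^\star=\MEAS-\MLambda^\star$ (this is, up to symmetrization, the same matrix $\MC$ the paper writes down in \eqref{certificate_matrix_definition}), and use rank deficiency to feed rank-one tangent directions $\vy\vxx\tran$ into the second-order condition to conclude $\MQ^\star\succeq0$. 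Your closing remark is exactly right: since this is \cite[Corollary 8]{Boumal16nips} verbatim, one may simply cite it, which is what the paper does.

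One small wording quibble: the sentence ``the curvature corrections of the constraint manifold exactly cancel the Lagrange-multiplier terms'' slightly inverts what is happening. The Riemannian Hessian of $f$ at a critical point equals the tangential restriction of the Euclidean Hessian of $f$ \emph{plus} a Weingarten term built from the normal component of $\nabla f$; since that normal component is precisely $\MS^\star\MLambda^\star$, the Weingarten contribution \emph{is} the $-\MLambda^\star$ term, yielding the Hessian of the Lagrangian and hence the clean identity $\langle\dot\MS,\Hess f(\MS^\star)[\dot\MS]\rangle=2\trace{\MQ^\star\dot\MS\tran\dot\MS}$ on the tangent space. The conclusion is unaffected.
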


This result immediately suggests the following simple algorithm (the \textbf{Riemannian staircase} \cite{Boumal16nips}) for recovering solutions $\MZ^\star$ of \eqref{eq:SDP} from \eqref{eq:Riemannian_LRSDP}: for some small relaxation rank $\lowrank\geq d$, find a second-order critical point $\St^\star$ of problem \eqref{eq:Riemannian_LRSDP} using a local search technique. If $\St^\star$ is rank-deficient, then Theorem \ref{BM_optimality_theorem_thm} proves that $\St^\star$ is a \emph{global} minimizer of \eqref{eq:Riemannian_LRSDP}, and $\MZ^\star = {\St^\star}\tran \St^\star$ is a solution of  \eqref{eq:SDP}. 
Otherwise, increase the rank parameter $\lowrank$ and try again. 
In the worst possible case, we might have to take $\lowrank$ as large as $dn + 1$ before finding a rank-deficient solution.
However, in practice typically only one or two ``stairs" suffice  -- just a \emph{bit} of extra room in \eqref{eq:Riemannian_LRSDP} vs. \eqref{eq:matricized_rotation_averaging} is all one needs!

Many popular optimization algorithms only guarantee convergence to \emph{first}-order critical points because they use only limited second-order information \cite{Nocedal99}.  
This is the case for the Gauss-Newton and Levenberg-Marquardt methods in particular, where the model Hessian is positive-semidefinite by construction.
Fortunately, there is a simple procedure that one can use to test the global optimality of a \emph{first}-order critical point $\St^\star$ of \eqref{eq:Riemannian_LRSDP}, and (if necessary) to construct a \emph{direction of descent} that we can use to ``nudge'' $\St^\star$ away from stationarity before restarting local optimization at the next level of the Staircase \cite{Boumal2015Riemannian}.

\begin{theorem}
Let $\St^\star \in \Stiefel{d}{\lowrank}^\nrNodes$ be a first-order critical point of \eqref{eq:Riemannian_LRSDP}, define
\begin{equation}
\MC \triangleq \MEAS - \frac{1}{2}\BDiag_{d\times d}\left(\MEAS {\St^\star}\tran \St^\star + {\St^\star}\tran \St^\star \MEAS \right),
\label{certificate_matrix_definition}
\end{equation}
and let $\lambda_{\textnormal{min}}$ be the minimum eigenvalue of $\MC$, with corresponding eigenvector $v_{\textnormal{min}} \in \Real{d\nrNodes}$.  Then:
\begin{itemize}
 \item [$(i)$]  If $\lambda_{\textnormal{min}} \ge 0$, then $\St^\star$ is a global minimizer of \eqref{eq:Riemannian_LRSDP}, and $\MZ^\star = {\St^\star}\tran \St^\star$ is a global minimizer of \eqref{eq:SDP}.
 \item [$(ii)$]  If $\lambda_{\textnormal{min}} < 0$, then the higher-dimensional lifting $
\St^{+} \triangleq 
\begin{bmatrix}
\St^\star ; 0
\end{bmatrix} \in \Stiefel{d}{\lowrank + 1}^\nrNodes
\label{Splus_definition}
$ of $\St^\star$ is a stationary point for the rank-restricted relaxation \eqref{eq:Riemannian_LRSDP} at the next level $\lowrank + 1$ of the Riemannian Staircase, and $
\dot{\St}^{+} \triangleq 
\begin{bmatrix}
0 ; 
v_{\textnormal{min}\tran}
\end{bmatrix} \in T_{\St^+}\left(\Stiefel{d}{\lowrank + 1}^\nrNodes \right)
\label{second_order_descent_direction_for_Stiefel_manifold}
$
is a second-order descent direction from $\St^+$.
\end{itemize}
\label{saddle_escape_theorem}
\end{theorem}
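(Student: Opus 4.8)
The plan is to recognize Theorem (saddle-escape) as the "certification and escape" counterpart of the Riemannian Staircase (Theorem \ref{BM_optimality_theorem_thm}), and to derive both parts from the first-order optimality conditions of the rank-restricted problem \eqref{eq:Riemannian_LRSDP}. First I would write out the Lagrangian of \eqref{eq:SDP}: since the only constraints are $\BDiag_{d\times d}(\MZ) = (\eye_d,\dotsc,\eye_d)$, the dual variable is a symmetric block-diagonal matrix $\MLambda = \BDiag_{d\times d}(\MLambda_1,\dotsc,\MLambda_\nrNodes)$, and the KKT/complementary-slackness conditions read $\MEAS - \MLambda \succeq 0$, $(\MEAS - \MLambda)\MZ = 0$, with $\MZ\succeq 0$ feasible. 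The content of the theorem is that, at a \emph{first-order critical point} $\St^\star$ of the Stiefel problem, the specific block-diagonal matrix obtained by symmetrizing $\BDiag_{d\times d}(\MEAS\, {\St^\star}\tran\St^\star)$ is exactly the dual variable that makes $\MLambda$ feasible for the dual and complementary to $\MZ^\star = {\St^\star}\tran\St^\star$; equivalently $\MC$ from \eqref{certificate_matrix_definition} equals $\MEAS - \MLambda$ for that choice. So the first key step is: compute the Riemannian gradient of $\trace{\MEAS\,\St\tran\St}$ over $\Stiefel{d}{\lowrank}^\nrNodes$, set it to zero, and read off that $\MC\,\St^{\star\tran} = 0$ (i.e.\ each block-column of $\St^\star$ lies in the kernel of $\MC$). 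This is the standard computation: the Euclidean gradient is $2\St\MEAS$, and projecting onto the tangent space of the product of Stiefel manifolds (removing the block-diagonal symmetric part) yields precisely $2\St^\star\MC = 0$.

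For part $(i)$, suppose $\lambda_{\min}(\MC)\ge 0$, i.e.\ $\MC\succeq 0$. Then $\MLambda \triangleq \MEAS - \MC = \tfrac12\BDiag_{d\times d}(\MEAS{\St^\star}\tran\St^\star + {\St^\star}\tran\St^\star\MEAS)$ is block-diagonal and feasible for the SDP dual, and $\MZ^\star = {\St^\star}\tran\St^\star$ is feasible for \eqref{eq:SDP} by construction (its block-diagonal is $\eye_d$ since $\St^\star$ has orthonormal block-columns). Moreover $\trace{\MEAS\MZ^\star} - \langle \text{dual objective},\MLambda\rangle = \trace{\MC\MZ^\star} = \trace{\MC\,{\St^\star}\tran\St^\star} = \trace{\St^\star\MC\,{\St^\star}\tran}$, which vanishes because $\St^\star\MC = 0$ from the first-order condition. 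Hence strong duality holds with zero gap, so $\MZ^\star$ is a global minimizer of \eqref{eq:SDP}; and since any feasible $\MS$ for \eqref{eq:Riemannian_LRSDP} gives a feasible $\MS\tran\MS$ for \eqref{eq:SDP}, $\St^\star$ is a global minimizer of \eqref{eq:Riemannian_LRSDP} as well.

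For part $(ii)$, assume $\lambda_{\min} < 0$ with unit eigenvector $v_{\min}$, $\MC v_{\min} = \lambda_{\min} v_{\min}$. First I would check that $\St^+ = [\St^\star;0]$ is still first-order critical at level $\lowrank+1$: its block-columns remain orthonormal, and the certificate matrix is unchanged since $\St^{+\tran}\St^+ = {\St^\star}\tran\St^\star$, so the first-order condition $\St^+\MC = [\St^\star\MC;0] = 0$ still holds. Then I would verify $\dot{\St}^+ = [0; v_{\min}\tran]$ is a tangent vector to the product Stiefel manifold at $\St^+$ — this reduces to checking the linearized orthonormality constraint $\St^{+\tran}\dot{\St}^{+} + \dot{\St}^{+\tran}\St^+$ is zero blockwise, which holds because the only nonzero row of $\dot{\St}^+$ is orthogonal to the zero bottom row of $\St^+$. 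The main work, and the part I expect to be the real obstacle, is showing $\dot{\St}^+$ is a \emph{second-order} descent direction, i.e.\ that the Riemannian Hessian of the objective at the (first-order critical) point $\St^+$ is strictly negative along $\dot{\St}^+$. Here I would use the known formula (from \cite{Boumal16nips,Boumal2015Riemannian}) for the Hessian quadratic form at a critical point: it equals $2\,\trace{\dot{\St}^+ \MC\, \dot{\St}^{+\tran}}$ modulo a term involving the connection/normal component that vanishes precisely because $\St^+$ is first-order critical (so the $\BDiag$ "correction" in $\MC$ is exactly what absorbs the curvature of the constraint manifold). Evaluating, $\trace{\dot{\St}^+\MC\,\dot{\St}^{+\tran}} = v_{\min}\tran \MC v_{\min} = \lambda_{\min} < 0$, giving a strict decrease direction. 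The delicate point to get right is the bookkeeping that reconciles the "Euclidean Hessian plus constraint-curvature" expression with the clean form $2\trace{\dot{\St}\MC\dot{\St}\tran}$; this is where the specific symmetrized block-diagonal definition of $\MC$ in \eqref{certificate_matrix_definition} is essential, and I would cite the corresponding computation in \cite{Rosen16wafr-sesync,Boumal2015Riemannian} rather than redo it in full.
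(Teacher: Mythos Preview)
Your proposal is correct and follows essentially the same approach that the paper sketches. In fact the paper does not give a standalone proof of this theorem; it only supplies a short Remark immediately after the statement, noting that part~(i) is exactly the standard KKT/complementarity certificate for the SDP \eqref{eq:SDP} (with $\MC = \MEAS - \MLambda$ playing the role of the dual slack), and that part~(ii) follows because $\MC$ also encodes the action of the Riemannian Hessian of \eqref{eq:Riemannian_LRSDP}, so a negative eigenvector of $\MC$ yields a direction of negative curvature at the lifted point $\St^+$. Your write-up makes these two observations explicit --- deriving $\St^\star\MC = 0$ from the Riemannian first-order condition, closing the duality gap when $\MC\succeq 0$, and evaluating the Hessian quadratic form $2\,\trace{\dot{\St}^+\MC\,\dot{\St}^{+\tran}} = 2\lambda_{\min}\lVert v_{\min}\rVert^2$ --- which is precisely the content the paper defers to \cite{Boumal2015Riemannian,Rosen16wafr-sesync}.
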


\begin{remark}[Geometric interpretation of Theorems \ref{SDP_exactness_theorem}--\ref{saddle_escape_theorem}]   With $\MC$ defined as in \eqref{certificate_matrix_definition}, part (i) of Theorem\ \ref{saddle_escape_theorem} is simply the standard necessary and sufficient conditions for $\MZ^\star = {\St^\star}\tran \St^\star$ to be a minimizer of \eqref{eq:SDP} \cite{Vandenberghe96siam}. $\MC$ \emph{also} gives the action of the Riemannian Hessian of \eqref{eq:Riemannian_LRSDP} on tangent vectors; therefore, if $\lambda_\textnormal{min}(\MC) < 0$, the corresponding direction of negative curvature $\dot{\St}^{+}$ defined in part (ii) of Theorem\ \ref{saddle_escape_theorem} provides a second-order descent direction from $\St^+$.  Theorem\ \ref{BM_optimality_theorem_thm} is an analogue of Theorem\ \ref{saddle_escape_theorem}(i).  Finally, Theorem\ \ref{SDP_exactness_theorem} follows from the fact that the certificate matrix $\MC$ depends continuously upon both the data $\MEAS$ and $\MS^\star$, and is \emph{always} valid for $\MS^\star = \MR^\star$ in the noiseless case; one can then employ a continuity argument to show that $\MC$ \emph{remains} a valid certificate for $\MZ^\star = {\MR^\star}\tran \MR^\star$ as a minimizer of \eqref{eq:SDP} for sufficiently small noise; see \cite[Appendix C]{Rosen18ijrr-sesync} for details.
\end{remark}


\subsection{Rounding the solution}

\begin{algorithm}[t]
\caption{Rounding procedure for solutions of \eqref{eq:Riemannian_LRSDP}}
\label{Rounding_algorithm}
\begin{algorithmic}[1]
\Input A minimizer $\St \in \Stiefel{d}{\lowrank}^\nrNodes$ of \eqref{eq:Riemannian_LRSDP}.
\Output A feasible point $\hat{\MR} \in \SO{d}^\nrNodes$ for \eqref{eq:RA0}.
\Function{RoundSolution}{$\St$}
\State Compute a rank-$d$ truncated SVD $\MU_d \M\varXi_d \MV_d\tran$ for $\St$  and assign $\hat{\MR} \leftarrow \M\varXi_d \MV_d\tran$.
\State Set $N_{+} \leftarrow \lvert \lbrace \hat{\MR}_i \mid \det(\hat{\MR}_i)  > 0 \rbrace \rvert$. \label{count_positive_determinants}
\If{$N_{+} < \lceil \frac{n}{2} \rceil$}
\State $\hat{\MR} \leftarrow \Diag(1, \dotsc, 1, -1) \hat{\MR}$.
\EndIf \label{end_of_determinant_reflection}
\For{$i = 1, \dotsc, n$}
\State Set $\hat{\MR}_i \leftarrow$ \Call{NearestRotation}{$\hat{\MR}_i$} (see \cite{Umeyama91pami})
\EndFor
\State \Return $\hat{\MR}$
\EndFunction
 \end{algorithmic}
\end{algorithm}

Algorithm \ref{Rounding_algorithm} provides a truncated SVD procedure to extract a feasible point $\hat{\MR} \in \SO{d}^\nrNodes$ for the original rotation averaging problem \eqref{eq:RA0} from the optimal factor $\St^\star \in \Stiefel{d}{\lowrank}^\nrNodes$ obtained via the rank-restricted relaxation \eqref{eq:Riemannian_LRSDP}. We need to ensure that $\hat{\MR}$ is a \emph{global minimizer} of \eqref{eq:RA0} whenever the semidefinite relaxation \eqref{eq:SDP} is \emph{exact}, and is at least an \emph{approximate} minimizer otherwise.  
The factor $\MR$ of a symmetric factorization $\MZ = \MR \tran \MR$ is only unique up to left-multiplication by some $\MA \in \mathrm{O}(d)$.  The purpose of lines \ref{count_positive_determinants}--\ref{end_of_determinant_reflection}  is to choose a representative $\hat{\MR}$ with a majority of $d \times d$ blocks $\hat{\MR}_i$ satisfying $\det(\hat{\MR}_i) > 0$, since these should \emph{all} be rotations in the event \eqref{eq:SDP} is exact~\cite[Sec.\ 4.2]{Rosen16wafr-sesync}.

\subsection{From Stiefel manifolds to rotations}

In this section we show how to reformulate the low-rank optimization \eqref{eq:Riemannian_LRSDP} as an optimization over the product $\SO{\lowrank}^\nrNodes$ of rotations of $\lowrank$-dimensional space.  This is convenient from an implementation standpoint, as affordances for performing optimization over rotations are a standard feature of many high-performance optimization libraries commonly used in robotics and computer vision \cite{CeresManual,Kuemmerle11icra,Dellaert12tr}.  

The main idea underlying our approach is the simple observation that since the columns of a rotation matrix $\MQ \in \SO{\lowrank}$ are orthonormal, then in particular the first $d$ columns $\MQ_{[1:d]}$ form an orthonormal $d$-frame in $\lowrank$-space, i.e., the submatrix $\MQ_{[1:d]}$ is itself an element of $\Stiefel{d}{\lowrank}$. Let us define the following projection, which simply extracts these first $d$ columns:
\begin{equation}
\begin{gathered}
\pi \colon \SO{\lowrank} \to \Stiefel{d}{\lowrank} \\
\pi(\MQ) = \MQ \MP
\end{gathered}
\label{rotation_to_Stiefel_projection}
\end{equation}
where $\MP = [\eye_d;  0]$ is the $\lowrank \times d$ projection matrix appearing in \eqref{eq:Shonan}.  It is easy to see that $\pi$ is surjective for any $\lowrank > d$:\footnote{In the case that $d = \lowrank$, $\Stiefel{\lowrank}{\lowrank} \supset \SO{\lowrank}$, and it is impossible for $\pi$ to be surjective.} given any element $\St = [s_1, \dotsc, s_d] \in \Stiefel{d}{\lowrank}$, we can construct a rotation $\MQ \in \pi\inv(\St)$ simply by extending $\lbrace s_1, \dotsc s_d \rbrace$ to an orthonormal basis $\lbrace s_1, \dotsc, s_d, v_1, \dotsc, v_{p-d} \rbrace$ for $\Real{\lowrank}$ using the Gram-Schmidt process, and (if necessary) multiplying the final element $v_{p-d}$ by $-1$ to ensure that this basis has a positive orientation; the matrix $\MQ = [s_1, \dotsc, s_d, v_1, \dotsc, v_{p-d}] \in \SO{\lowrank}$ whose columns are the elements of this basis is then a rotation satisfying $\pi(\MQ) = \St$.  Conversely, if $\MQ \in \pi\inv(\St)$, then by \eqref{rotation_to_Stiefel_projection} $\MQ = [\St, \MV]$ for some $\MV \in \Real{\lowrank \times  (\lowrank - d) }$; writing the orthogonality constraint $\MQ\tran\MQ = \eye_\lowrank$ in terms of this block decomposition then produces:
\begin{equation}
\MQ\tran \MQ =
\begin{bmatrix}
\St\tran\St & \St\tran \MV  \\
\MV\tran \St & \MV\tran \MV
\end{bmatrix} = 
\begin{bmatrix}
\eye_d & 0 \\
0 & \eye_{\lowrank - d}
\end{bmatrix}.
\label{block_decomposition_for_preimage_of_piS}
\end{equation}
It follows from \eqref{block_decomposition_for_preimage_of_piS} that the preimage of any $\St \in \Stiefel{d}{p}$ under the projection $\pi$ in \eqref{rotation_to_Stiefel_projection} is given explicitly by:
\begin{equation}
\pi\inv(\St) = \left \lbrace [\St, \MV] \mid \; \MV \in \Stiefel{\lowrank-d}{\lowrank}, \:
 \St\tran \MV = 0, \:
 \det([\St, \MV]) = +1
\right \rbrace. \label{preimage_of_Stiefel_element_under_pi}
\end{equation}
Equations \eqref{rotation_to_Stiefel_projection} and \eqref{preimage_of_Stiefel_element_under_pi} provide a means of representing $\Stiefel{d}{\lowrank}$ using $\SO{\lowrank}$: given any $\St \in \Stiefel{d}{\lowrank}$, we can represent it using one of the rotations in \eqref{preimage_of_Stiefel_element_under_pi} in which it appears as the first $d$ columns, and conversely, given any $\MQ \in \SO{\lowrank}$, we can extract its corresponding Stiefel manifold element using $\pi$.  We can extend this relation to \emph{products} of rotations and Stiefel manifolds in the natural way:
\begin{equation}
\begin{gathered}
\Pi \colon \SO{\lowrank}^\nrNodes \to \Stiefel{d}{\lowrank}^\nrNodes \\
\Pi(\MQ_1, \dotsc, \MQ_\nrNodes) = \left(\pi(\MQ_1), \dotsc, \pi(\MQ_\nrNodes) \right)
\end{gathered}
\label{product_projection}
\end{equation}
and $\Pi$ is likewise surjective for $\lowrank > d$.

The projection $\Pi$ enables us to ``pull back" the rank-restricted optimization \eqref{eq:Riemannian_LRSDP} on $\Stiefel{d}{\lowrank}^\nrNodes$ to an equivalent optimization problem on $\SO{\lowrank}^\nrNodes$.  Concretely, if $f \colon \Stiefel{d}{\lowrank}^\nrNodes \to \Real{}$ is the objective of \eqref{eq:Riemannian_LRSDP}, then we simply define the objective $\tilde{f}$ of our ``lifted" optimization over $\SO{\lowrank}^\nrNodes$ to be the pullback of $f$ via $\Pi$:
\begin{equation}
\begin{gathered}
\tilde{f} \colon \SO{\lowrank}^\nrNodes \to \Real{} \\
\tilde{f}(\MQ) = f \circ \Pi(\MQ).
\label{pullback_objective}
\end{gathered}
\end{equation}
Comparing \eqref{eq:Riemannian_LRSDP}, \eqref{product_projection}, and \eqref{pullback_objective} reveals that the pullback of the low-rank optimization \eqref{eq:Riemannian_LRSDP} to $\SO{\lowrank}^\nrNodes$ is exactly the Shonan Averaging problem \eqref{eq:Shonan}.


\begin{theorem}
\label{equivalence_of_lifted_optimization_theorem}
Let $\lowrank > d$.  Then:
\begin{itemize}
\item[$(i)$] The rank-restricted optimization \eqref{eq:Riemannian_LRSDP} and the Shonan Averaging problem  \eqref{eq:Shonan} attain the same optimal value.
\item[$(ii)$] $\MQ^\star \in \SO{p}^\nrNodes$ minimizes \eqref{eq:Shonan} if and only if $\St^\star = \Pi(\MQ^\star)$ minimizes \eqref{eq:Riemannian_LRSDP}.
\end{itemize}
\end{theorem}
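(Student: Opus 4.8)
The plan is to reduce everything to two facts already established: (a) as observed immediately before the theorem, the Shonan objective in \eqref{eq:Shonan} is \emph{literally} the pullback $\tilde f = f\circ\Pi$ of the objective $f$ of the rank-restricted relaxation \eqref{eq:Riemannian_LRSDP}; and (b) the projection $\Pi$ is surjective whenever $\lowrank > d$. Granting these, the statement is pure soft analysis and requires no further differential geometry.

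First I would prove part $(i)$. From $\tilde f = f\circ\Pi$ we immediately get $\tilde f(\SO{\lowrank}^\nrNodes) \subseteq f(\Stiefel{d}{\lowrank}^\nrNodes)$, and surjectivity of $\Pi$ yields the reverse inclusion, so the two objectives have exactly the same set of attained values. Since $\SO{\lowrank}^\nrNodes$ and $\Stiefel{d}{\lowrank}^\nrNodes$ are compact and $f, \tilde f$ are continuous (indeed smooth), each objective attains its infimum; equality of the value sets then forces equality of the optimal values. For part $(ii)$, fix $\MQ^\star \in \SO{\lowrank}^\nrNodes$, set $\St^\star = \Pi(\MQ^\star)$, and let $v$ denote the common optimal value from part $(i)$. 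By the pullback identity $\tilde f(\MQ^\star) = f(\St^\star)$, so $\MQ^\star$ minimizes \eqref{eq:Shonan} iff $\tilde f(\MQ^\star) = v$ iff $f(\St^\star) = v$ iff $\St^\star$ minimizes \eqref{eq:Riemannian_LRSDP}. Surjectivity of $\Pi$ -- concretely, the explicit preimages \eqref{preimage_of_Stiefel_element_under_pi} produced by Gram--Schmidt -- additionally shows the correspondence is onto, i.e.\ every minimizer of \eqref{eq:Riemannian_LRSDP} equals $\Pi(\MQ^\star)$ for some minimizer $\MQ^\star$ of \eqref{eq:Shonan}.

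I do not anticipate a genuine obstacle here: all the substance is in the earlier verifications that $\tilde f = f\circ\Pi$ and that $\Pi$ is surjective. The only point to state with care is that ``same optimal value'' is meant literally (the minima, not merely the infima, coincide), which is immediate from compactness of the two feasible manifolds. If one wishes, the geometric mechanism can be recorded as a remark: the fibers of $\pi$ are precisely the orbits of the right action of $\Stab(\MP)\cong\SO{\lowrank-d}$ on $\SO{\lowrank}$, and $\tilde f$ is constant along them, so \eqref{eq:Shonan} is nothing but \eqref{eq:Riemannian_LRSDP} carrying extra, redundant fiber directions -- but this refinement is not needed for the statement as given.
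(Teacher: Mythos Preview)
Your proposal is correct and follows essentially the same approach as the paper: both arguments rest entirely on the pullback identity $\tilde f = f\circ\Pi$ together with the surjectivity of $\Pi$ for $\lowrank > d$. The paper phrases part~(i) as a brief contradiction argument (if $\Pi(\MQ^\star)$ were not a minimizer of \eqref{eq:Riemannian_LRSDP}, pull back a better point via surjectivity to contradict optimality of $\MQ^\star$), whereas you go directly via equality of the value sets; your explicit appeal to compactness to guarantee that the infima are attained is a nice point of care that the paper leaves implicit, but otherwise the two proofs are the same.
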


Similarly, we also have the following analogue of Theorem \ref{saddle_escape_theorem} for problem \eqref{eq:Shonan}:

\begin{theorem}
\label{certifying_optimality_of_lifted_optimization_theorem}
Let $\MQ^\star = (\MQ_1^\star, \dotsc, \MQ_\nrNodes^\star) \in \SO{\lowrank}^\nrNodes$ be a first-order critical point of problem \eqref{eq:Shonan}, and $\St^\star = (\St_1^\star, \dotsc, \St_\nrNodes^\star) = \Pi(\MQ^\star) \in \Stiefel{d}{\lowrank}^\nrNodes$.  Then:
\begin{itemize}
 \item [$(i)$]  $\St^\star$ is a first-order critical point of the rank-restricted optimization \eqref{eq:Riemannian_LRSDP}.
 \item [$(ii)$]  Let $\MC$ be the certificate matrix defined in \eqref{certificate_matrix_definition}, $\lambda_{\textnormal{min}}$ its minimum eigenvalue, and $v = (v_1, \dotsc, v_\nrNodes) \in \Real{d\nrNodes}$ a corresponding eigenvector.  If $\lambda_{\min} < 0$, then the point $\MQ^+ \in \SO{\lowrank + 1}^\nrNodes$ defined elementwise by:
 \begin{equation}
\MQ_i^+ = 
\begin{bmatrix}
\MQ^\star_i & 0 \\
0 & 1
\end{bmatrix} \in \SO{\lowrank + 1}
\label{higher_dimensional_rotation}
 \end{equation}
is a first-order critical point of problem \eqref{eq:Shonan} in dimension $\lowrank + 1$, and the tangent vector $\dot{\MQ}^+ \in T_{\MQ^+}(\SO{\lowrank+1}^\nrNodes)$ defined blockwise by:
\begin{equation}
\dot{\MQ}_i^+ = 
\MQ^+_i 
\begin{bmatrix}
0 & - v_i \\
v_i\tran & 0
\end{bmatrix} \in T_{\MQ_i^+}(\SO{\lowrank+1})
\label{second_order_descent_direction_for_rotation_manifold}
\end{equation}
is a second-order descent direction from $\MQ^+$.
 \end{itemize}
\end{theorem}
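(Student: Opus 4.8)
\emph{Proof strategy.} The plan is to reduce the statement to its already-established Stiefel-manifold counterpart, Theorem~\ref{saddle_escape_theorem}, using the projection $\Pi\colon\SO{\lowrank}^{\nrNodes}\to\Stiefel{d}{\lowrank}^{\nrNodes}$ of \eqref{product_projection} and the fact that the Shonan objective is by construction the pullback $\tilde f = f\circ\Pi$ of the rank-restricted objective $f$ of \eqref{eq:Riemannian_LRSDP} (see \eqref{pullback_objective}). The geometric ingredient I would set up first is that, for $\lowrank>d$, $\pi$ in \eqref{rotation_to_Stiefel_projection} (hence $\Pi$) is a surjective submersion: writing an arbitrary tangent vector at $\MQ=[\St,\MV]\in\SO{\lowrank}$ in block form $\MQ\,\Omega$ with $\Omega\in\so(\lowrank)$ and differentiating $\pi(\MQ)=\MQ\MP$ shows that $d\pi_{\MQ}$ maps it to $\St A+\MV C$, where $A\in\so(d)$ is the top-left block of $\Omega$ and $C$ (its lower-left block) is free, and this exhausts $T_{\pi(\MQ)}\Stiefel{d}{\lowrank}$. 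From this I obtain the chain rule $d\tilde f_{\MQ}=df_{\Pi(\MQ)}\circ d\Pi_{\MQ}$ and, crucially, the observation that \emph{at any common critical point} the second-derivative identity $\Hess\tilde f(\MQ)[\xi,\xi]=\Hess f(\Pi(\MQ))[\,d\Pi_{\MQ}\xi,\,d\Pi_{\MQ}\xi\,]$ holds, the curvature-correction term dropping out because $\grad f(\Pi(\MQ))=0$. (The degenerate case $\lowrank=d$ is immediate, as $\SO{d}^{\nrNodes}$ is open in $\Stiefel{d}{d}^{\nrNodes}$.)

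For part $(i)$ I would argue directly: first-order criticality of $\MQ^\star$ for $\tilde f$ says $df_{\St^\star}(d\Pi_{\MQ^\star}\xi)=0$ for every tangent vector $\xi$ at $\MQ^\star$; since $d\Pi_{\MQ^\star}$ is onto $T_{\St^\star}\Stiefel{d}{\lowrank}^{\nrNodes}$, this forces $df_{\St^\star}\equiv 0$, i.e.\ $\St^\star=\Pi(\MQ^\star)$ is a first-order critical point of \eqref{eq:Riemannian_LRSDP}.

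For part $(ii)$, with $(i)$ in hand I would feed $\St^\star$ into Theorem~\ref{saddle_escape_theorem}$(ii)$ — observing that the certificate matrix $\MC$ of \eqref{certificate_matrix_definition} is literally the same matrix here, as it depends only on $\MEAS$ and ${\St^\star}\tran\St^\star = \Pi(\MQ^\star)\tran\Pi(\MQ^\star)$ — to conclude that the lift $\St^+=[\St^\star;0]\in\Stiefel{d}{\lowrank+1}^{\nrNodes}$ is a first-order critical point of the Stiefel relaxation at level $\lowrank+1$ and that $\dot{\St}^+=[0;v\tran]$ is a second-order descent direction there, i.e.\ $\Hess f(\St^+)[\dot{\St}^+,\dot{\St}^+]<0$. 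A short block computation using \eqref{rotation_to_Stiefel_projection}, \eqref{higher_dimensional_rotation}, and \eqref{second_order_descent_direction_for_rotation_manifold} — reading each $v_i$ as its zero-extension from $\Real{d}$ to $\Real{\lowrank}$ inside \eqref{second_order_descent_direction_for_rotation_manifold} — then verifies the two compatibility identities $\Pi(\MQ^+)=\St^+$ and $d\Pi_{\MQ^+}(\dot{\MQ}^+)=\dot{\St}^+$. Criticality of $\MQ^+$ for \eqref{eq:Shonan} at level $\lowrank+1$ follows from the trivial (surjectivity-free) direction of the chain rule, $d\tilde f_{\MQ^+}=df_{\St^+}\circ d\Pi_{\MQ^+}=0$; and since both $\MQ^+$ and $\St^+$ are critical points, the Hessian identity from the first paragraph gives $\Hess\tilde f(\MQ^+)[\dot{\MQ}^+,\dot{\MQ}^+]=\Hess f(\St^+)[\dot{\St}^+,\dot{\St}^+]<0$, so $\dot{\MQ}^+$ is a second-order descent direction from $\MQ^+$.

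The routine parts are the explicit differential of $\pi$ and the two block identities of part $(ii)$. I expect the one step needing care to be the Hessian transfer: one must justify that along \emph{any} curve through a critical point the function's second derivative equals the Riemannian Hessian quadratic form on the curve's velocity — this is what legitimizes transporting the Hessian through $\Pi$, which is not a Riemannian submersion and need not interact cleanly with the connections underlying Gauss-Newton or Levenberg-Marquardt — together with confirming that both $\MQ^+$ and $\St^+$ are genuine critical points so that the identity applies on both sides.
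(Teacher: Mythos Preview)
Your proposal is correct and follows essentially the same approach as the paper's proof: both use the submersion property of $\Pi$ together with the chain rule to push criticality from $\MQ^\star$ to $\St^\star$ for part~$(i)$, and then for part~$(ii)$ verify the compatibility identities $\Pi(\MQ^+)=\St^+$ and $d\Pi_{\MQ^+}(\dot{\MQ}^+)=\dot{\St}^+$ in order to transfer Theorem~\ref{saddle_escape_theorem}$(ii)$ back through $\Pi$. The one presentational difference is that the paper carries out the Hessian transfer concretely---it fixes a retraction, defines the curve $\tilde\gamma(t)=R_{\MQ^+}(t\dot{\MQ}^+)$, and computes $(\tilde f\circ\tilde\gamma)''(0)$ directly via the product rule, watching the $\grad f(\St^+)$ term drop out---whereas you package the same computation as the abstract identity $\Hess\tilde f(\MQ^+)[\dot{\MQ}^+,\dot{\MQ}^+]=\Hess f(\St^+)[\dot{\St}^+,\dot{\St}^+]$ valid at a common critical point; your closing paragraph correctly flags exactly the justification this identity needs.
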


Theorems \ref{equivalence_of_lifted_optimization_theorem} and \ref{certifying_optimality_of_lifted_optimization_theorem} are proved in Appendix \ref{Proof_of_lifted_certifying_optimality_theorem_section} of the supplementary material.


\subsection{The complete algorithm}

Combining the results of the previous sections gives the complete \emph{Shonan Averaging} algorithm (Algorithm \ref{Shonan_Averaging}).  (We employ Levenberg-Marquardt to perform the fast local optimization required in line \ref{local_search_in_Shonan_averaging} -- see Appendix \ref{GN_for_Shonan_Averaging_section} for details).

\begin{algorithm}[t]
\caption{Shonan Rotation Averaging}
\label{Shonan_Averaging}
\begin{algorithmic}[1]
\Input  An initial point $\MQ \in \SO{\lowrank}^\nrNodes$ for \eqref{eq:Shonan},  $\lowrank \ge d$.
\Output A feasible estimate $\hat{\MR}\in \SO{d}^\nrNodes$ for the rotation averaging problem \eqref{eq:RA0}, and the lower bound $\pSDP$ for problem \eqref{eq:RA0}'s optimal value.
\Function{ShonanAveraging}{$\MQ$}
\Repeat  \Comment{Riemannian Staircase}
\State $\MQ \leftarrow \Call{LocalOpt}{\MQ}$  \Comment{Find critical point of \eqref{eq:Shonan}} \label{local_search_in_Shonan_averaging}
\State Set $\St \leftarrow \Pi(\MQ)$  \Comment{Project to $\Stiefel{d}{\lowrank}^\nrNodes$}
\State Construct the certificate matrix $\MC$ in \eqref{certificate_matrix_definition}, and \label{minimum_eigenvalue_computation}
\Statex[2] compute its minimum eigenpair $(\lambda_{\textnormal{min}}, v_{\textnormal{min}})$. 
\If{$\lambda_{\textnormal{min}} < 0$} \Comment{$\MZ = \St\tran \St$ is \emph{not} optimal}
\State Set $\lowrank \leftarrow \lowrank + 1$ and $
{\MQ}_i \leftarrow 
\begin{pmatrix}
\MQ_i & 0 \\
0 & 1
\end{pmatrix}$
 $\forall i$. \Comment{Ascend to next level}
\State Construct descent direction $\dot{\MQ}$ as in \eqref{second_order_descent_direction_for_rotation_manifold}.
\State $\MQ \leftarrow \Call{LineSearch}{\MQ, \dot{\MQ}}$  \Comment{Nudge $\MQ$} \label{Saddle_escape}
\EndIf

\Until{$\lambda_{\textnormal{min}} \ge 0$}  \Comment{$\MZ = \St\tran\St$ solves \eqref{eq:SDP}}
\State Set $\pSDP \leftarrow \trace{\MEAS  \St\tran\St}$.  
\State Set $\hat{\MR} \leftarrow \Call{RoundSolution}{\St}$.  
\State \Return $\left \lbrace  \hat{\MR}, \pSDP \right \rbrace$
\EndFunction
 \end{algorithmic}
\end{algorithm}

When applied to an instance of rotation averaging, Shonan Averaging returns a feasible point $\hat{\MR} \in \SO{d}^\nrNodes$ for the maximum likelihood estimation \eqref{eq:RA0}, and a lower bound $\pSDP \le \pMLE$ for problem \eqref{eq:RA0}'s optimal value.  This lower bound provides an \emph{upper} bound on the suboptimality of \emph{any} feasible point $\MR \in \SO{d}^\nrNodes$ as a solution of problem \eqref{eq:RA0} according to:
\begin{equation}
 \label{suboptimality_bound_for_Problem_1}
 f(\MEAS \MR\tran \MR) - \pSDP \ge f(\MEAS \MR\tran \MR) - \pMLE.
\end{equation}
Furthermore, if the relaxation \eqref{eq:SDP} is exact, the  estimate $\hat{\MR}$ returned by Algorithm \ref{Shonan_Averaging} \emph{attains} this lower bound:
\begin{equation}
 \label{MLE_attains_lower_bound_in_case_of_exactness}
  f(\MEAS \hat{\MR}\tran \hat{\MR}) = \pSDP.
\end{equation}
Consequently, verifying \emph{a posteriori} that \eqref{MLE_attains_lower_bound_in_case_of_exactness} holds provides a \emph{certificate} of $\hat{\MR}$'s \emph{global} optimality as a solution of the rotation averaging problem \eqref{eq:RA0}.


\section{Experimental Results}
\label{Experimental_results_section}

\begin{table*}[t!]
\footnotesize
\centerline{
\begin{tabular}{|c || c | c || c | c || c | c  || c | c ||}
\hline
& & & \multicolumn{5}{|c||}{Shonan Averaging} \\
&  $n$ & $m$ & $\lowrank$ & $\lambda_{\textnormal{min}}$ & $\pSDP$  & Opt.\ time [s] & Min.\ eig.\ time [s] \\
\hline 
smallGrid & $125$ & $297$ & $5$ & $-9.048\times 10^{-6}$  & $4.850 \times 10^2$ & $2.561 \times 10^{-1}$ & $2.613 \times 10^{-1}$ \\
\hline
sphere & $2500$ & $4949$ & $6$ & $-1.679\times 10^{-4}$ & $5.024 \times 10^0$  & $1.478 \times 10^1$ &  $1.593 \times 10^1$ \\
\hline
torus & $5000$ & $9048$ & $5$ & $-2.560 \times 10^{-4}$ &   $1.219 \times 10^4$ &  $7.819 \times 10^1$ & $8.198 \times 10^1$\\ 
\hline
garage & $1661$ & $6275$ & $5$ & $1.176 \times 10^{-4}$ &  $2.043 \times 10^{-1}$ & $1.351 \times 10^1$ & $1.420 \times 10^1$ \\ 
\hline 
cubicle & $5750$ & $16869$ & $5$ & $-9.588 \times 10^{-5}$ &  $3.129 \times 10^1$ & $1.148 \times 10^2$ & $1.208 \times 10^2$ \\
\hline
\end{tabular}
}
\caption{Shonan Averaging results for the SLAM benchmark datasets} 
\label{SLAM_benchmarks_table}
\end{table*}

In this section we evaluate Shonan Averaging's performance on 
(a) several large-scale problems derived from standard pose-graph SLAM benchmarks,
(b) small randomly generated synthetic datasets, 
and (c) several structure from motion (SFM) datasets used in earlier work. 
The experiments were performed on a 
desktop computer with a 6-core Intel i5-9600K CPU @ 3.70GHz running Ubuntu 18.04.  
Our implementation of Shonan Averaging (SA) is written in C++, using the GTSAM library \cite{Dellaert12tr} to perform the local optimization.
The fast minimum-eigenvalue computation described in \cite[Sec.\ III-C]{Rosen17irosws-SEsyncImplementation} is implemented using the symmetric Lanczos algorithm from the Spectra library\footnote{\url{https://Spectralib.org/}} to compute the minimum eigenvalue required in line \ref{minimum_eigenvalue_computation}.  
We initialize the algorithm with a randomly-sampled point $\MQ \in \SO{\lowrank}^\nrNodes$ at level $\lowrank = 5$, and require that the minimum eigenvalue satisfy $\lambda_{\textnormal{min}} \ge -10^{-4}$ at termination of the Riemannian Staircase.

The results for the pose-graph SLAM benchmarks demonstrate that Shonan Averaging is already capable of recovering \emph{certifiably globally optimal} solutions of large-scale real-world rotation averaging problems in tractable time.  
For  these experiments, we have extracted the rotation averaging problem \eqref{eq:RA0} obtained by simply ignoring the translational parts of the measurements. 
Table \ref{SLAM_benchmarks_table} reports the size of each problem (the number of unknown rotations $n$ and relative measurements $m$), together with the relaxation rank $\lowrank$ at the terminal level of the Riemannian Staircase, the optimal value of the semidefinite relaxation \eqref{eq:SDP} corresponding to the recovered low-rank factor $\St^\star$, and the total elapsed computation times for the local optimizations in line \ref{local_search_in_Shonan_averaging} and minimum-eigenvalue computations in line \ref{minimum_eigenvalue_computation} of Algorithm \ref{Shonan_Averaging}, respectively. 
We remark that the datasets torus in Table \ref{SLAM_benchmarks_table} is \emph{1-2 orders of magnitude larger} than the examples reported in previous work on direct globally-optimal rotation averaging algorithms \cite{Eriksson18cvpr-strongDuality,Fredriksson12accv}.

We attribute Shonan Averaging's improved scalability to its use of superlinear \emph{local} optimization in conjunction with the Riemanian Staircase; in effect, this strategy provides a means of ``upgrading'' a fast \emph{local} method to a fast \emph{global} one.  This enables Shonan Averaging to leverage existing heavily-optimized, scalable, and high-performance software libraries for  superlinear \emph{local} optimization \cite{CeresManual,Kuemmerle11icra,Dellaert12tr} as the main computational engine of the algorithm (cf.\ line \ref{local_search_in_Shonan_averaging} of Algorithm \ref{Shonan_Averaging}), while preserving the guarantee of \emph{global} optimality.  
\begin{table}[h]
\footnotesize
\centerline{
\begin{tabular}{clrrr}
\hline
     n,$\sigma$      & method   &   error &   time (s) &   success \\
\hline
 \hline
n=20 $\sigma$=0.2  & SA       &  0.000\% &      0.030 &      100\% \\
              & BD       &  1.397\% &      0.374 &       60\% \\
              & LM       & -0.001\% &      0.003 &       80\% \\
 \hline
n=20 $\sigma$=0.5  & SA       &  0.000\% &      0.037 &      100\% \\
              & BD       &  0.014\% &      0.446 &       40\% \\
              & LM       & -0.001\% &      0.004 &       20\% \\
 \hline
n=50 $\sigma$=0.2  & SA       &  0.000\% &      0.141 &      100\% \\
              & BD       &  0.291\% &      6.261 &       60\% \\
              & LM       & -0.039\% &      0.011 &       20\% \\
 \hline
n=50 $\sigma$=0.5  & SA       &  0.000\% &      0.194 &      100\% \\
              & BD       & -7.472\% &      6.823 &       60\% \\
              & LM       & -0.008\% &      0.014 &       40\% \\
 \hline
n=100 $\sigma$=0.2 & SA       &  0.000\% &      0.403 &      100\% \\
              & BD       &    nan\% &    nan     &        0\% \\
              & LM       & -0.014\% &      0.016 &       20\% \\
 \hline
n=100 $\sigma$=0.5 & SA       &  0.000\% &      0.373 &      100\% \\
              & BD       &  3.912\% &     32.982 &       20\% \\
              & LM       & -0.142\% &      0.023 &       60\% \\
 \hline
n=200 $\sigma$=0.2 & SA       &  0.000\% &      1.275 &      100\% \\
              & BD       &    nan\% &    nan     &        0\% \\
              & LM       & -0.102\% &      0.049 &       60\% \\
 \hline
n=200 $\sigma$=0.5 & SA       &  0.000\% &      1.845 &      100\% \\
              & BD       &    nan\% &    nan     &        0\% \\
              & LM       & -0.293\% &      0.046 &       40\% \\
\hline
\end{tabular}
}
\caption{Synthetic results for varying problem sizes $n$ and noise levels $\sigma$. SA=Shonan, BD=block coordinate descent from~\cite{Eriksson18cvpr-strongDuality}, LM=Levenberg-Marquardt.} 
\label{tab:synthetic}
\end{table}

The synthetic results confirm that Shonan Averaging always converges to the true (\emph{global}) minimizer, and significantly outperforms the block coordinate descent method from~\cite{Eriksson18cvpr-strongDuality,Eriksson19pami_duality} as soon as the number of unknown rotations becomes large. 
We followed the approach for generating the data from \cite{Eriksson18cvpr-strongDuality}, generating 4 sets of problem instances of increasing size, for two different noise levels. We generated 3D poses on a circular trajectory, forming a cycle graph. Relative rotation measurements were corrupted by composing with a random axis-angle perturbation, where the axis was chosen randomly and the angle was generated from a normal distribution with a standard deviation $\sigma=0.2$ or $\sigma=0.5$. 
Initialization for all three algorithms was done randomly, with angles uniform random over the range $(-\pi,\pi)$.
We compare our results with two baselines: (LM) Levenberg-Marquardt on the $\SO{3}^n$ manifold, also implemented in GTSAM, which is fast but not expected to find the global minimizer; (BD): the block coordinate descent method from~\cite{Eriksson18cvpr-strongDuality,Eriksson19pami_duality}. Because no implementation was available, we re-implemented it in Python, using a $3\times 3$ SVD decomposition at the core. However, our implementation differs in that we establish optimality using the eigenvalue certificate from Shonan.

The synthetic results are shown in Table \ref{tab:synthetic}. In the table we show the percentage of the cases in which either method found a global minimizer, either as certified by the minimum eigenvalue $\lambda_{\textnormal{min}} \ge -10^{-4}$, or being within $5\%$ of the optimal cost. Shonan Averaging (``SA" in the table) finds an optimal solution every time as certified by $\lambda_{\textnormal{min}}$. Levenberg-Marquardt is fast but finds global minima in only about 40\% to 60\% of the cases. Finally, the block coordinate descent method (BD) is substantially slower, and does not always converge within the allotted time. We found that in practice BD converges very slowly and it takes a long time for the algorithm to converge to the same value as found by SA or LM. In the table we have shown the relative error and running time (in seconds), both averaged over 5 runs. For $n=100$ and $n=200$ the BD method did often not converge to the global optimum within reasonable time.

\begin{table}[t]
\footnotesize
\centerline{
\begin{tabular}{lcrrr}
\hline
                     dataset & method   &   error &   time &   success \\
\hline
 reichstag            & SA       &  0.000\% &      0.197 &      100\% \\
 (n=71, m=2554)         & BD       &  0.000\% &      0.247 &      100\% \\
                      & LM       &  0.001\% &      0.085 &      80\% \\
\hline
 pantheon\_interior & SA       &  0.000\% &      0.971 &      100\% \\
 (n=186, m=10000)     & BD       &  0.000\% &     1.823 &      100\% \\
                      & LM       &  0.001\% &     0.313 &       100\% \\
\hline
\end{tabular}
}
\caption{Results on SFM problems from the YFCC dataset.} 
\label{tab:YFCC}
\end{table}

Finally, Table \ref{tab:YFCC} shows results on two datasets derived by Heinly et al.~\cite{Heinly15cvpr_ReconstructWorld6Days} from the large-scale YFCC-100M dataset~\cite{Thomee16acm_YFCC100M,Heinly15cvpr_ReconstructWorld6Days}. The relative measurements for these were derived from the SFM solution provided with the data, and corrupted with noise as before, using $\sigma=0.2$. All three methods agree on the globally optimal solution, and the timing data shows the same trend as in Table \ref{tab:synthetic}.



\section{Conclusion}

In this work we presented Shonan Rotation Averaging, a fast, simple, and elegant algorithm for rotation averaging that is \emph{guaranteed} to recover globally optimal solutions under mild conditions on the measurement noise.  Our approach applies a fast \emph{local} search technique to a \emph{sequence} of higher-dimensional lifts of the rotation averaging problem until a globally optimal solution is found.  Shonan Averaging thus leverages the speed and scalability of existing high-performance \emph{local} optimization methods already in common use, while enabling the recovery of \emph{provably optimal} solutions.

\newpage
{\small
\bibliographystyle{ieee_fullname}
\bibliography{refs,myRefs,FDrefs}
}

\clearpage

\appendix
{\Large Supplementary material}

\section{Proof of Theorems \ref{equivalence_of_lifted_optimization_theorem} and \ref{certifying_optimality_of_lifted_optimization_theorem} }
\label{Proof_of_lifted_certifying_optimality_theorem_section}
\subsection{Proof of Theorem \ref{equivalence_of_lifted_optimization_theorem} }

\begin{proof}
For part (i), let $\MQ^\star \in \SO{p}^\nrNodes$ be a minimizer of \eqref{eq:Shonan}, with corresponding optimal value $\tilde{f}^\star$, and suppose for contradiction that $\St^\star = \Pi(\MQ^\star)$ is \emph{not} a minimizer of \eqref{eq:Riemannian_LRSDP}.  Then there exists some $\St' \in \Stiefel{d}{\lowrank}^\nrNodes$ with $f(\St') < f(\St^\star) =  \tilde{f}^\star$.  However, since $\Pi$ is surjective for $\lowrank > d$, then there exists some $\MQ' \in \Pi\inv(\St')$, and by definition of $\tilde{f}$ in \eqref{pullback_objective} we have  that $\tilde{f}(\MQ') = f(\St') < \tilde{f}^\star$, contradicting the optimality of $\MQ^\star$.  We conclude that $\St^\star$ is indeed a minimizer of $f$ over $\Stiefel{d}{\lowrank}^\nrNodes$, and therefore the optimal values of \eqref{eq:Riemannian_LRSDP} and \eqref{eq:Shonan} coincide.

Part (ii) follows immediately from part (i) since $\tilde{f}(\MQ) = f(\St)$ for any $\MQ \in \SO{\lowrank}^\nrNodes$ and $\St = \Pi(\MQ)$ by \eqref{pullback_objective}.
\end{proof}

\subsection{Proof of Theorem \ref{certifying_optimality_of_lifted_optimization_theorem}}

In this section we derive Theorem \ref{certifying_optimality_of_lifted_optimization_theorem} as a consequence of Theorem \ref{saddle_escape_theorem} and the equivalence of the rank-restricted optimization  \eqref{eq:Riemannian_LRSDP} and the Shonan Averaging problem \eqref{eq:Shonan} (Theorem \ref{equivalence_of_lifted_optimization_theorem}).  To do so, we need to understand how the local geometry of the lifted objective $\tilde{f}$ relates to that of $f$ near a critical point $\MQ^\star \in \SO{\lowrank}^\nrNodes$.  Recall that the tangent spaces of the rotational and Stiefel manifolds can be expressed as:
\begin{equation}
T_\MQ(\SO{\lowrank}) = \left \lbrace \MQ \dot{\MDelta} \mid \dot{\MDelta} \in \Skew(\lowrank) \right \rbrace
\label{rotation_manifold_tangent_spaces}
\end{equation}
and
\begin{equation}
T_{\St}(\Stiefel{d}{\lowrank}) = \left \lbrace \St \dot{\MOmega} + \MV \dot{\MK} \mid
\begin{aligned}
&\MOmega \in \Skew(d), \\
&\dot{\MK} \in \Real{(\lowrank - d) \times d}, \\  
&\MV \in \Real{\lowrank \times (\lowrank -d)}, \\
&\St\tran \MV = 0 
\end{aligned}\right \rbrace
\label{Stiefel_manifold_tangent_spaces}
\end{equation}
respectively (cf.\ Example 3.5.2 of \cite{Absil07book}).  If we rewrite the elements of \eqref{rotation_manifold_tangent_spaces} in a block-partitioned form compatible with the action of the projection \eqref{rotation_to_Stiefel_projection}:
\begin{equation}
T_\MQ(\SO{\lowrank}) = \left \lbrace \MQ
\begin{bmatrix}
\dot{\MOmega} & -\dot{\MK}\tran \\
\dot{\MK} & \dot{\MGamma} 
\end{bmatrix}
\mid 
\begin{aligned}
\dot{\MOmega} &\in \Skew(d), \\
\dot{\MK} &\in \Real{(\lowrank -d) \times d}, \\
\dot{\MGamma} &\in \Skew(\lowrank-d)
\end{aligned}
\right \rbrace
\label{partitioned_rotation_manifold_tangent_spaces}
\end{equation}
then the derivative of the projection $\pi$ at $\MQ = \begin{bmatrix} \St & \MV \end{bmatrix} \in \SO{\lowrank}$ is:
\begin{equation}
\begin{gathered}
d\pi_{\MQ} \colon T_\MQ(\SO{\lowrank}) \to T_{\St}(\Stiefel{d}{\lowrank}) \\
d\pi_{\MQ}\left(
\MQ  
\begin{bmatrix}
\dot{\MOmega} & -\dot{\MK}\tran \\
\dot{\MK} & \dot{\MGamma} 
\end{bmatrix}
\right) 
= \St \dot{\MOmega} + \MV\dot{\MK}.
\end{gathered}
\label{derivative_of_rotation_to_Stiefel_projection}
\end{equation}

The next result is a direct consequence of \eqref{rotation_manifold_tangent_spaces}--\eqref{derivative_of_rotation_to_Stiefel_projection}:
\begin{lemma}
The projection $\pi \colon \SO{\lowrank} \to \Stiefel{d}{\lowrank}$ is a submersion.\footnote{A smooth mapping $\varphi \colon \mathcal{X} \to \mathcal{Y}$ between manifolds is called a \emph{submersion} at a point $x \in \mathcal{X}$ if its derivative $d\varphi_x \colon T_x(\mathcal{X}) \to T_{\varphi(x)}(\mathcal{Y})$ is surjective.  It is called a \emph{submersion} (unqualified) if it is a submersion at \emph{every} $x \in \mathcal{X}$.}
\end{lemma}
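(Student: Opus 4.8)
The plan is to verify the definition of submersion directly: fixing an arbitrary $\MQ \in \SO{\lowrank}$, I need to show that the derivative $d\pi_\MQ \colon T_\MQ(\SO{\lowrank}) \to T_{\pi(\MQ)}(\Stiefel{d}{\lowrank})$ is surjective. Almost all of the work is already contained in the block formula \eqref{derivative_of_rotation_to_Stiefel_projection}; the proof amounts to reading surjectivity off of it, together with a careful matching of the two descriptions of the Stiefel tangent space.

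First I would write $\MQ$ in the block form $\MQ = \begin{bmatrix} \St & \MV \end{bmatrix}$ compatible with $\pi$ as in \eqref{rotation_to_Stiefel_projection}, so that $\St = \pi(\MQ) \in \Stiefel{d}{\lowrank}$ and the remaining columns $\MV \in \Real{\lowrank \times (\lowrank - d)}$ satisfy $\St\tran \MV = 0$ and $\MV\tran\MV = \eye_{\lowrank - d}$, i.e.\ $\mathrm{col}(\MV) = \mathrm{col}(\St)^\perp$. The one point that needs care is that the description \eqref{Stiefel_manifold_tangent_spaces} of $T_{\St}(\Stiefel{d}{\lowrank})$ quantifies over an \emph{arbitrary} matrix with columns orthogonal to $\St$, whereas I want to use this specific $\MV$. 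This is harmless: any $\MV'$ with $\St\tran \MV' = 0$ has its columns in $\mathrm{col}(\St)^\perp = \mathrm{col}(\MV)$, hence $\MV' = \MV\MB$ for some $\MB \in \Real{(\lowrank - d) \times (\lowrank - d)}$ and $\MV'\dot{\MK}' = \MV(\MB\dot{\MK}')$; absorbing $\MB$ into $\dot{\MK}$ yields the equivalent description $T_{\St}(\Stiefel{d}{\lowrank}) = \{\, \St\dot{\MOmega} + \MV\dot{\MK} \mid \dot{\MOmega} \in \Skew(d),\ \dot{\MK} \in \Real{(\lowrank - d) \times d} \,\}$ in terms of the fixed $\MV$.

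With this in hand, surjectivity is immediate. Given an arbitrary tangent vector $\St\dot{\MOmega} + \MV\dot{\MK} \in T_{\St}(\Stiefel{d}{\lowrank})$, I would exhibit the explicit preimage
\[
\dot{\MQ} \;\triangleq\; \MQ \begin{bmatrix} \dot{\MOmega} & -\dot{\MK}\tran \\ \dot{\MK} & 0 \end{bmatrix},
\]
which lies in $T_\MQ(\SO{\lowrank})$ because it has exactly the block form \eqref{partitioned_rotation_manifold_tangent_spaces} with $\dot{\MGamma} = 0 \in \Skew(\lowrank - d)$, and which satisfies $d\pi_\MQ(\dot{\MQ}) = \St\dot{\MOmega} + \MV\dot{\MK}$ by \eqref{derivative_of_rotation_to_Stiefel_projection}. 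Hence $d\pi_\MQ$ is onto; since $\MQ$ was arbitrary, $\pi$ is a submersion. (In the degenerate case $\lowrank = d$ the $\MV$-, $\dot{\MK}$-, and $\dot{\MGamma}$-blocks are empty, $\pi$ is just the inclusion $\SO{d} \hookrightarrow \mathrm{O}(d)$, and $d\pi_\MQ$ is an isomorphism, so the statement holds trivially there as well.) There is no real obstacle in this argument; the only step worth spelling out is the reconciliation of the two forms of $T_{\St}(\Stiefel{d}{\lowrank})$ above, and everything else is substitution into \eqref{partitioned_rotation_manifold_tangent_spaces}--\eqref{derivative_of_rotation_to_Stiefel_projection}.
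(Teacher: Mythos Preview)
Your proposal is correct and follows essentially the same approach as the paper: write $\MQ = [\St \; \MV]$, express an arbitrary Stiefel tangent vector as $\St\dot{\MOmega} + \MV\dot{\MK}$, and read off a preimage from the block form \eqref{partitioned_rotation_manifold_tangent_spaces}--\eqref{derivative_of_rotation_to_Stiefel_projection}. The only difference is that you are more explicit about reconciling the existentially-quantified $\MV$ in \eqref{Stiefel_manifold_tangent_spaces} with the specific $\MV$ coming from $\MQ$, and you fix $\dot{\MGamma} = 0$ rather than leaving it arbitrary; both are minor elaborations of the same argument.
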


\begin{proof}
Let $\MQ = \begin{bmatrix} \St & \MV \end{bmatrix} \in \SO{\lowrank}$, $\St = \pi(\MQ)$, and $\dot{\MY} \in T_{\St}(\Stiefel{d}{\lowrank})$; we must show that there is some $\dot{\MX} \in T_{\MQ}(\SO{\lowrank})$ such that $\dot{\MY} = d\pi_{\MQ}(\dot{\MX})$.  Since $\St\tran \MV = 0$ (as the columns of $\MQ$ are orthonormal), by \eqref{Stiefel_manifold_tangent_spaces} there exist some $\dot{\MOmega} \in \Skew(d)$ and $\dot{\MK} \in \Real{(\lowrank -d) \times d}$ such that $\dot{\MY} = \St \dot{\MOmega} + \MV\dot{\MK}$.  Letting $\dot{\MGamma} \in \Skew(\lowrank - d)$, and defining
\begin{equation}
\dot{\MX} = \MQ
\begin{bmatrix}
\dot{\MOmega} & -\dot{\MK}\tran\\
\dot{\MK} & \dot{\MGamma}
\end{bmatrix} \in T_{\MQ}(\SO{\lowrank})
\end{equation}
as in \eqref{partitioned_rotation_manifold_tangent_spaces}, it follows from \eqref{derivative_of_rotation_to_Stiefel_projection} that $\dot{\MY} = d\pi_{\MQ}(\dot{\MX})$.
\end{proof}

\begin{corollary}
 The projection $\Pi \colon \SO{\lowrank}^\nrNodes \to \Stiefel{d}{\lowrank}^\nrNodes$ is a submersion.
 \label{Projection_submersion_corollary}
\end{corollary}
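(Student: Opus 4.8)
The plan is to deduce this directly from the Lemma, using the standard fact that a finite product of submersions is a submersion. First I would recall that, by construction \eqref{product_projection}, $\Pi$ is the $\nrNodes$-fold product map $\pi \times \dotsb \times \pi$, i.e.\ it acts coordinatewise: $\Pi(\MQ_1, \dotsc, \MQ_\nrNodes) = (\pi(\MQ_1), \dotsc, \pi(\MQ_\nrNodes))$. Next I would invoke the canonical identification of the tangent space of a product manifold with the direct sum of the tangent spaces of its factors, so that at a point $\MQ = (\MQ_1, \dotsc, \MQ_\nrNodes)$ we have $T_{\MQ}(\SO{\lowrank}^\nrNodes) \cong \bigoplus_{i=1}^{\nrNodes} T_{\MQ_i}(\SO{\lowrank})$ and $T_{\Pi(\MQ)}(\Stiefel{d}{\lowrank}^\nrNodes) \cong \bigoplus_{i=1}^{\nrNodes} T_{\pi(\MQ_i)}(\Stiefel{d}{\lowrank})$. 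Because the product smooth structure is the product of the factor structures and $\Pi$ is coordinatewise, under these identifications the differential factors blockwise as $d\Pi_{\MQ} = d\pi_{\MQ_1} \oplus \dotsb \oplus d\pi_{\MQ_\nrNodes}$.

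The finish is then immediate: the Lemma says $\pi$ is a submersion, so each block $d\pi_{\MQ_i} \colon T_{\MQ_i}(\SO{\lowrank}) \to T_{\pi(\MQ_i)}(\Stiefel{d}{\lowrank})$ is surjective; a direct sum of surjective linear maps is surjective, hence $d\Pi_{\MQ}$ is surjective. Since $\MQ \in \SO{\lowrank}^\nrNodes$ was arbitrary, $\Pi$ is a submersion at every point, which is the assertion of the corollary.

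I do not expect any genuine obstacle here — this is a routine consequence of the Lemma. The only point that deserves a line of care is the blockwise decomposition of $d\Pi_{\MQ}$, i.e.\ the identification of the tangent space of the product with the product of tangent spaces and the observation that the differential of a coordinatewise map is the corresponding ``block-diagonal'' map; both facts are standard and follow directly from the definition of the product manifold structure, so the proof should be only a few lines.
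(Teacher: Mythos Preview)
Your proposal is correct and is exactly the argument the paper implicitly relies on: the Corollary is stated without proof immediately after the Lemma, treating it as an immediate consequence via the product structure. Your explicit blockwise decomposition of $d\Pi_{\MQ}$ is precisely the standard justification the paper leaves to the reader.
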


\noindent  \textbf{Proof of Theorem \ref{certifying_optimality_of_lifted_optimization_theorem}}:

Part(i): Applying the Chain Rule to \eqref{pullback_objective} produces:
\begin{equation}
\label{derivative_of_lifted_objective}
 d\tilde{f}_{\MQ^\star} = df_{\St^\star} \circ d\Pi_{\MQ^\star}.
\end{equation}
The left-hand side of \eqref{derivative_of_lifted_objective} is $0$ because $\MQ^\star$ is a first-order critical point of $\tilde{f}$ by hypothesis, and Corollary \ref{Projection_submersion_corollary} shows that $d\Pi_{\MQ^\star}$ is full-rank (i.e., its image is all of $T_{\St^\star}(\Stiefel{d}{\lowrank})^n$).  It follows that $df_{\St^\star} = 0$, and therefore $\St^\star$ is a first-order critical point of \eqref{eq:Riemannian_LRSDP}.

Part (ii):  Observe that $\Pi(\MQ^+) = \St^+$ as defined in Theorem \ref{Splus_definition}(ii); since $\St^+$ is a stationary point for \eqref{eq:Riemannian_LRSDP}, another application of the Chain Rule as in \eqref{derivative_of_lifted_objective} shows that $d\tilde{f}_{\MQ^+} = 0$, and therefore $\MQ^+$ is also a first-order critical point of the lifted optimization \eqref{eq:Shonan} in dimension $\lowrank + 1$.  Theorem \ref{saddle_escape_theorem}(ii) also provides the second-order descent direction $\dot{\St}^+$ from $\St^+$.  The tangent vector $\dot{\MQ}^+$ defined in \eqref{second_order_descent_direction_for_rotation_manifold} is constructed as a \emph{lift} of $\dot{\St}^+$ to $T_{\MQ^+}(\SO{\lowrank + 1}^\nrNodes)$; that is, so that $\dot{\MQ}^+$ satisfies $d\Pi_{\MQ^+}(\dot{\MQ}^+) = \dot{\St}^+$. Indeed, using \eqref{higher_dimensional_rotation}, \eqref{second_order_descent_direction_for_rotation_manifold}, and \eqref{derivative_of_rotation_to_Stiefel_projection} we can compute each block of $d\Pi_{\MQ^+}(\dot{\MQ}^+)$ as:
\begin{equation}
\begin{split}
d\Pi_{\MQ^+}(\dot{\MQ}^+)_{i} 
&= d\pi_{\MQ_i^+}\left(
\begin{bmatrix}
\MQ^\star_i & 0 \\
0 & 1
\end{bmatrix}
\begin{bmatrix}
0 & - v_i \\
v_i\tran & 0
\end{bmatrix}
\right)\\
&=
\begin{bmatrix}
0 \\
v_i\tran
\end{bmatrix}
= \dot{\St}_i^+
\end{split}
\end{equation}
for all $i \in [\nrNodes]$, verifying that $\dot{\MQ}^+$ is a lift of $\dot{\St}^+$.  

Our goal now is to show that $\dot{\MQ}^+$ is likewise a second-order descent direction from $\MQ^+$. We do this using a proof technique similar to that of Proposition 5.5.6 in \cite{Absil07book}. Let $R_{\MQ^+} \colon T_{\MQ^+}(\SO{\lowrank+1}^\nrNodes) \to \SO{\lowrank + 1}^\nrNodes$ be any (first-order) retraction on the tangent space of $\SO{\lowrank + 1}^\nrNodes$ at $\MQ^+$, and let $\epsilon > 0$ be sufficiently small that the curve:
\begin{equation}
\begin{gathered}
 \tilde{\gamma} \colon (-\epsilon, \epsilon) \to \SO{\lowrank + 1}^\nrNodes \\
 \tilde{\gamma}(t) = R_{\MQ^+}(t \dot{\MQ}^+)
 \end{gathered}
\end{equation}
obtained by moving through the point $\MQ^+ \in \SO{\lowrank + 1}^\nrNodes$ along the direction $\dot{\MQ}^+$ is well-defined.  Our approach simply involves examining the behavior of the lifted objective $\tilde{f}$ at points along the curve $\tilde{\gamma}$ in a neighborhood of $\tilde{\gamma}(0) = \MQ^+$.  To do so, we define:
\begin{equation}
\begin{gathered}
\tilde{f} \colon (-\epsilon, \epsilon) \to \Real{} \\
\tilde{f}(t) = \tilde{f} \circ \tilde{\gamma}(t)
\end{gathered}
\end{equation}
and then consider its first- and second-order derivatives.  Once again using the Chain Rule (and unwinding the definition of $\tilde{f}$), we compute:
\begin{equation}
\begin{split}
\tilde{f}'(t) &= d\tilde{f}_{\tilde{\gamma}(t)} \circ \tilde{\gamma}'(t) \\
&=  df_{\Pi \circ \tilde{\gamma}(t)} \circ d\Pi_{\tilde{\gamma}(t)} \circ \tilde{\gamma}'(t) \\
&= \left \langle \grad f \left( \Pi \circ \tilde{\gamma}(t)\right) , \: d\Pi_{\tilde{\gamma}(t)} \circ \gamma'(t)   \right \rangle.
\end{split}
\label{first_order_derivative_of_escape_curve}
\end{equation}
Note that at $t = 0$, $\grad f \left( \Pi \circ \tilde{\gamma}(0)\right) =  \grad f(\Pi(\MQ^+)) = \grad f(\St^+) = 0$ since $\St^+$ is a stationary point.  It follows from \eqref{first_order_derivative_of_escape_curve} that $f'(0) = 0$ (as expected, since we know that $\MQ^+$ is a stationary point for the lifted optimization \eqref{eq:Shonan}).  Continuing, we compute the second derivative of $\tilde{f}(t)$ by applying the Product and Chain Rules to differentiate the inner product on the final line of \eqref{first_order_derivative_of_escape_curve}:
\begin{equation}
\begin{split}
\tilde{f}''(t) &= \frac{d}{dt} \left[ \left \langle \grad f \left( \Pi \circ \tilde{\gamma}(t)\right) , \: d\Pi_{\tilde{\gamma}(t)} \circ \gamma'(t)   \right \rangle \right] \\
&= \left \langle \frac{d}{dt} \left[\grad f \left( \Pi \circ \tilde{\gamma}(t)\right) \right], \: d\Pi_{\tilde{\gamma}(t)} \circ \gamma'(t) \right \rangle \\
&\quad + \left \langle  \grad f \left( \Pi \circ \tilde{\gamma}(t)\right), \: \frac{d}{dt} \left[ d\Pi_{\tilde{\gamma}(t)} \circ \gamma'(t) \right] \right \rangle.
\end{split}
\label{setup_for_second_derivative_computation}
\end{equation}
Now, we just saw that $\grad f \left( \Pi \circ \tilde{\gamma}(0)\right) = 0$, so the second term of the final line of \eqref{setup_for_second_derivative_computation} is zero for $t = 0$.  Moreover, the derivative in the first term can be further developed as:
\begin{equation}
\frac{d}{dt} \left[\grad f \left( \Pi \circ \tilde{\gamma}(t)\right) \right] = \Hess f\left( \Pi \circ \tilde{\gamma}(t)\right)\left[ d\Pi_{\tilde{\gamma}(t)} \circ \tilde{\gamma}'(t) \right].
\label{derivative_of_gradient_vector_field}
\end{equation}
Therefore, at $t = 0$ \eqref{setup_for_second_derivative_computation} simplifies as:
\begin{equation}
\begin{split}
\tilde{f}''(0) &= \left \langle \Hess f\left( \Pi \circ \tilde{\gamma}(0)\right)\left[ d\Pi_{\tilde{\gamma}(0)} \circ \tilde{\gamma}'(0)\right], \: d\Pi_{\tilde{\gamma}(0)} \circ \gamma'(0) \right \rangle \\
&= \left \langle \Hess f\left( \Pi(\MQ^+) \right)\left[ d\Pi_{\MQ^+}(\dot{\MQ}^+) \right], \: d\Pi_{\MQ^+}(\dot{\MQ}^+) \right \rangle \\
&= \left \langle \Hess f\left(\St^+\right)\left[\dot{\St}^+\right], \: \dot{\St}^+\right \rangle \\
&< 0
\end{split}
\label{proof_of_second_order_descent}
\end{equation}
where the final line of \eqref{proof_of_second_order_descent} follows from the fact that $\dot{\St}^+$ is a second-order direction of descent from $\St^+$.  We conclude from \eqref{proof_of_second_order_descent} that  $\dot{\MQ}^+$ is a second-order direction of descent from $\MQ^+$, as desired. \hfill $\square$

\section{Gauss-Newton for Shonan Averaging}
\label{GN_for_Shonan_Averaging_section}

We can implement the local search for (first-order) critical points of \eqref{eq:Shonan} required in line \ref{local_search_in_Shonan_averaging} of the Shonan Averaging algorithm using the same Gauss-Newton approach described in \prettyref{sec:Synchronization}.

\subsection{Linearization}

As before, we first rewrite \eqref{eq:Shonan} more explicitly as the minimization of the sum of the individual measurement residuals, in a vectorized form:
\bea
\min_{ \QSOdN{\lowrank} } \sumalledges 
\kappa_{ij} \normsq{ \myvec(\MQ_j \MP  - \MQ_i \MP \barMR_{ij}) }{2},
\label{eq:GN1}
\eea
and reparameterize this minimization in terms in terms of the Lie algebra $\so(\lowrank)$:
\bea
\min_{\vdelta \in \so(\lowrank)^\nrNodes} \sumalledges 
\kappa_{ij} \normsq{ \myvec \left(\MQ_j\exphat{\vdelta_j} \MP
- \MQ_i\exphat{\vdelta_i} \MP \MR_{ij}\right)}{2}.
\label{eq:GN2}
\eea
Once again, we approximate \eqref{eq:GN2} to first order as the linear least-squares objective:
\bea 
\min_{\vdelta \in \so(\lowrank)^\nrNodes} \sumalledges 
\kappa_{ij} \normsq{ \MF_j \vdelta_j - \MH_i \vdelta_i - \vb_{ij}  }{2},
\label{GN2_over_Lie_algebra}
\eea
where now the Jacobians $\MF_j$ and $\MF_i$ and the right-hand side $\vb_{ij}$ are calculated as
\bea
\MF_j & \doteq & (\MP\tran \otimes \MQ_j) \bar{\MG}_\lowrank \\
\MH_i & \doteq & ((\MP \barMR_{ij})\tran \otimes \MQ_i) \bar{\MG}_\lowrank \\
\vb_{ij} & \doteq & \myvec (\MQ_i \MP \barMR_{ij} - \MQ_j \MP)
\label{rhs}
\eea
with $\bar{\MG}_\lowrank$ the matrix of vectorized $\so(\lowrank)^\nrNodes$ generators.
Note that the right-hand side $\vb_{ij}$ in \eqref{rhs} in fact involves only the Stiefel manifold elements $\St_i = \pi(\MQ_i)$:
\bea
\vb_{ij} = \myvec (\St_i \barMR_{ij} - \St_j).
\eea

\subsection{The structure of the Lie algebra}
In this section we investigate the structure of the Lie algebra $\so(\lowrank)$ as it pertains to the linearized objective in \eqref{GN2_over_Lie_algebra}.  Recall that $\so(\lowrank)$ is identified with the tangent space $T_{\eye_\lowrank}(\SO{\lowrank}) = \Skew(\lowrank)$, the space of $\lowrank \times \lowrank$ skew-symmetric matrices.  Once again writing these matrices in a block-partitioned form compatible with the projection $\pi$ in \eqref{rotation_to_Stiefel_projection} (as in \eqref{partitioned_rotation_manifold_tangent_spaces}) produces:
\begin{equation}
\so(\lowrank) = \left \lbrace
\begin{bmatrix}
[\vomega] & -\MK\tran \\
\MK & [\vgamma] 
\end{bmatrix}
\mid 
\begin{aligned}
\vomega &\in \so(d), \\
\MK &\in \Real{(\lowrank -d) \times d}, \\
\vgamma &\in \so(\lowrank-d)
\end{aligned}
\right \rbrace
\label{Tangent_space_of_rotations_at_the_identity},
\end{equation}
and it follows from \eqref{rotation_to_Stiefel_projection} the derivative of $\pi$ at $\eye_\lowrank$ is:
\begin{equation}
\begin{gathered}
d\pi_{\eye_\lowrank}\left( 
\begin{bmatrix}
[\vomega] & -\MK\tran \\
\MK & [\vgamma] 
\end{bmatrix}
\right) = 
\begin{bmatrix}
[\vomega] \\
\MK\end{bmatrix}.
\label{derivative_of_pi_at_Ip}
\end{gathered}
\end{equation}

Note that $d\pi_{\eye_\lowrank}$ does \emph{not} depend upon the $(\lowrank - d)$-dimensional vector $\vgamma$.  In particular, let us define:
\begin{equation}
V_{\eye_\lowrank} \doteq \ker d\pi_{\eye_\lowrank} = \left \lbrace 
\begin{pmatrix}
0 & 0 \\
0 & [\vgamma]
\end{pmatrix} \mid \vgamma \in \so(\lowrank - d)
\right \rbrace \subset \so(\lowrank).
\label{vertical_space}
\end{equation}
Geometrically, the set $V_{\eye_\lowrank}$ defined in \eqref{vertical_space} consists of those directions of motion $\dot{\MOmega} \in \so(\lowrank)$ at $\eye_\lowrank$ along which the projection $\pi$ is \emph{constant}.  Equivalently:
\begin{equation}
V_{\eye_\lowrank} = T_{\eye_\lowrank}( \pi^{-1}(\MP));
\label{vertical_space_as_tangent_space_of_fiber}
\end{equation}
that is, $V_{\eye_\lowrank}$ is the set of vectors that are \emph{tangent} to the preimage $\pi^{-1}(\MP)$ of $\MP = \pi(\eye_\lowrank)$ at $\eye_\lowrank$.  If we think of the elements of the preimage $\pi^{-1}(\MP)$ as being vertically ``stacked" above their common projection $\MP \in \Stiefel{d}{\lowrank}$, then the subspace $V_{\eye_\lowrank}$ of the Lie algebra is precisely the set of tangent vectors at $\eye_\lowrank$ that correspond to \emph{vertical motions}.  Consequently, $V_{\eye_\lowrank}$ is referred to as the \emph{vertical space}.  We may define a corresponding \emph{horizontal space} in the natural way, i.e., as the orthogonal complement of the vertical space:
\begin{equation}
H_{\eye_\lowrank} \doteq V_{\eye_\lowrank}^{\perp}  = 
\left \lbrace 
\begin{pmatrix}
[\vomega] & -\MK\tran \\
\MK & 0
\end{pmatrix} \mid 
\begin{aligned}
\vomega &\in \so(d), \\
\MK &\in \Real{(\lowrank -d) \times d}
\end{aligned}
\right\rbrace \subset \so(\lowrank).
\label{horizontal_space}
\end{equation}

%
The significance of \eqref{Tangent_space_of_rotations_at_the_identity}--\eqref{horizontal_space} is that, to first order, the exponential map (or any retraction) can be written as $\eye+\liehat{\vdelta}$ for $\vdelta \in \so(\lowrank$).  In conjunction with the projection map $\pi$ (equivalently, with $\MP$), this implies:
\bea
\MQ_i \exphat{\vdelta}\MP \approx
\MQ_i (\eye+\liehat{\vdelta})\MP =
{\St}_i + \MQ_i \begin{bmatrix} \liehat{\vomega} \\ \MK \end{bmatrix}.
\eea
From this we see that the derivative of the cost function \eqref{eq:GN2} will not depend on the $(\lowrank-d)$-dimensional vector $\vgamma$, i.e., on the component of $[\vdelta]$ lying in the vertical space $V_{\eye_\lowrank}$.  This makes intuitive sense, since the Shonan Averaging objective $\tilde{f}(\MQ)$ in \eqref{eq:Shonan} is defined in terms of the projection $\Pi(\MQ)$ (recall \eqref{pullback_objective}), and moving along vertical directions leaves this projection unchanged.  

This in turn enables us to characterize the Jacobians in more detail. If we split $\MQ_i=\begin{bmatrix}\St_i & \MV_i\end{bmatrix}$, the Jacobians $\MF_j$ and $\MH_i$ can be shown to be:
\bea
\MF_j & \doteq & 
\begin{bmatrix}
(\eye_d \otimes \St_j) \bar{\MG}_d & (\eye_d \otimes {\MV}_j) & 0 
\end{bmatrix} \\
\MH_i & \doteq &
\begin{bmatrix}
(\barMR_{ij}\tran \otimes {\St}_i) \bar{\MG}_d & (\barMR_{ij}\tran \otimes  {\MV}_i) & 0 
\end{bmatrix} 
\label{eq:detailed-jacobians}
\eea
where $\bar{\MG}_d$ is the matrix of vectorized generators for the Lie algebra $\so(d)$ (as in Section \ref{GaussNewton_for_SOd}).  Again we see that the last columns, corresponding to the vertical directions, are zero.

The astute reader may now wonder whether the rank-deficiency of these Jacobians poses any numerical difficulties when solving the linear systems needed to compute the update step $\vdelta$.  In fact there are several straightforward ways to address this.  One approach is simply to employ the Levenberg-Marquardt method directly in conjunction with the Jacobians \eqref{eq:detailed-jacobians}; in this case, the Tikhonov regularization applied by the LM algorithm itself will ensure that all of the linear systems to be solved are nonsingular.  Moreover, this regularization will additionally encourage update steps to lie in the horizontal subspace, since any update with a nonzero vertical component will have to ``pay" for the length of that component (via regularization), while ``gaining" nothing for it (in terms of reducing the local model of the objective). 

Alternatively, one can remove the final $(\lowrank - d)$ all-$0$'s columns from the Jacobians in \eqref{eq:detailed-jacobians}, and solve the resulting reduced linear system in the variables $(\vomega, \MK)$.  Geometrically, this corresponds to minimizing the local quadratic model of the objective \eqref{GN2_over_Lie_algebra} over the horizontal subspace; a horizontal full-space update $[\vdelta]$ can then be obtained by simply taking $\vgamma = 0$.  It is straightforward to see that this procedure corresponds to computing a pseudoinverse (minimum-norm) minimizer of the quadratic model \eqref{GN2_over_Lie_algebra}.

Finally, a third approach is to regularize the original Shonan Averaging problem \eqref{eq:Shonan} by adding a prior term on the Karcher mean of the rotations $\MQ_i$ in $\MQ$; this has the effect of fixing the gauge for the underlying estimation problem, similarly to the use of ``inner constraints"  in photogrammetry \cite{Triggs00}.



\newpage
\section{More Experimental Results on the YFCC Datasets}

In this section we present more extensive results on the datasets derived by Heinly et al.~\cite{Heinly15cvpr_ReconstructWorld6Days} from the large-scale YFCC-100M dataset~\cite{Thomee16acm_YFCC100M,Heinly15cvpr_ReconstructWorld6Days}. As in the main paper, the relative measurements for these were derived from the SFM solution provided with the data, and corrupted with noise as before, using $\sigma=0.2$.
For all results below, minimum, average, and maximum running times (in seconds) are computed over 10 random initializations for each dataset. Also shown is the fraction of cases in which the method converges to a global minimizer.
All Shonan Averaging variants examined below use the same Levenberg-Marquardt non-linear optimizer, with a Jacobi-preconditioned conjugate gradient method as the linear system solver.
 
\subsection{Small Datasets ($n<50$)}
\begin{table*}[h!]
\footnotesize
\centerline{
\begin{tabular}{lcrrrrr}
\hline
 dataset                        &  method  &   error &     min &     avg &     max &   success \\
\hline
\hline
statue\_of\_liberty\_1           &    SA    &  0.000\% &   0.038 &   0.313 &   1.211 &      100\% \\
 (n=19, m=54)                   &    SL    &  0.000\% &   0.010 &   0.219 &   0.901 &      100\% \\
                                &    S3    &    nan\% & nan     & nan     & nan     &        0\% \\
                                &    S4    &  0.001\% &   0.009 &   0.016 &   0.019 &       30\% \\
                                &    S5    &  0.001\% &   0.010 &   0.018 &   0.022 &       40\% \\
                                &    SK    & -0.000\% &   0.008 &   \textbf{0.108} &   0.459 &      100\% \\
\hline
natural\_history\_museum\_london &    SA    &  0.000\% &   0.019 &   0.036 &   0.049 &      100\% \\
 (n=30, m=274)                  &    SL    &  0.000\% &   0.011 &   \textbf{0.021} &   0.068 &      100\% \\
                                &    S3    &  0.000\% &   0.010 &   0.013 &   0.015 &       60\% \\
                                &    S4    &  0.000\% &   0.011 &   0.016 &   0.021 &      100\% \\
                                &    S5    &  0.000\% &   0.009 &   0.014 &   0.018 &      100\% \\
                                &    SK    & -0.000\% &   0.021 &   \textbf{0.022} &   0.024 &      100\% \\
\hline
statue\_of\_liberty\_2           &    SA    &  0.000\% &   0.030 &   0.063 &   0.094 &      100\% \\
 (n=39, m=156)                  &    SL    &  0.001\% &   0.011 &   \textbf{0.034} &   0.060 &      100\% \\
                                &    S3    &  0.000\% &   0.010 &   0.028 &   0.046 &       40\% \\
                                &    S4    &  0.000\% &   0.011 &   0.024 &   0.047 &      100\% \\
                                &    S5    &  0.000\% &   0.010 &   0.030 &   0.057 &      100\% \\
                                &    SK    &  0.000\% &   0.019 &   0.050 &   0.113 &      100\% \\
\hline
taj\_mahal\_entrance            &    SA    &  0.000\% &   0.071 &   0.117 &   0.165 &      100\% \\
 (n=42, m=1272)                 &    SL    &  0.000\% &   0.032 &   \textbf{0.043} &   0.062 &      100\% \\
                                &    S3    &  0.000\% &   0.037 &   0.046 &   0.063 &       80\% \\
                                &    S4    &  0.000\% &   0.033 &   0.042 &   0.051 &      100\% \\
                                &    S5    &  0.000\% &   0.032 &   0.037 &   0.039 &      100\% \\
                                &    SK    & -0.000\% &   0.070 &   0.081 &   0.092 &      100\% \\
\hline
sistine\_chapel\_ceiling\_1      &    SA    &  0.000\% &   0.102 &   0.173 &   0.246 &      100\% \\
 (n=49, m=1754)                 &    SL    &  0.000\% &   0.064 &   \textbf{0.085} &   0.108 &      100\% \\
                                &    S3    &  0.000\% &   0.057 &   0.073 &   0.087 &       60\% \\
                                &    S4    &  0.000\% &   0.072 &   0.121 &   0.293 &      100\% \\
                                &    S5    &  0.000\% &   0.064 &   0.083 &   0.095 &      100\% \\
                                &    SK    & -0.000\% &   0.102 &   0.116 &   0.130 &      100\% \\
\hline
\end{tabular}
}
\caption{More results on YFCC datasets with $n < 50$. In this table, we compare 6 methods (see text for details). For each, we show the relative error with respect to SA, we give the minimum, average, and maximum running times (in seconds), and the fraction of cases in which the method converges to a global minimizer.} 
\label{YFCC50}
\end{table*}

Table \ref{YFCC50} shows additional experimental results on the small YFCC datasets (with $n<50$) with a more systematic exploration of the Shonan parameters. The intent is to provide more quantitative results for Shonan Averaging and its convergence properties at different levels of $p$, as well as explore parameter settings for optimal performance in practical settings.

In particular, we compared
\begin{itemize}
    \item SA: Shonan Averaging with $p_{min}=5$ and $p_{max}=30$.
    \item BD: the block coordinate descent method from~\cite{Eriksson18cvpr-strongDuality,Eriksson19pami_duality}. 
    \item SL: Same as SA, but starting (L)ow, from $SO(3)$: $p_{min}=3$ and $p_{max}=30$.
    \item S3: Only run with $p=3$, which corresponds to LM in the main paper: we only optimize at the base $SO(3)$ level.  Note that this approach has no guarantee of converging to a global minimizer.
    \item S4: Similar to S3, but with $P=4$: assesses in what percentage of cases we converge to global minima for $p=4$.
    \item S5: Similar to S3 and S4, but for $p=5$.
    \item SK: Shonan Averaging with $p_{min}=5$ and $p_{max}=30$, i.e., the same as SA, but with a different prior to fix the gauge freedom.
\end{itemize}

The last version, SK, was inspired by~\cite{Wilson2016rotations}, who stressed the importance of fixing the gauge symmetry to make the rotation averaging problem better-behaved. In particular, we fixed the Karcher mean of all rotations (for any level $p$) to remain at its initial value, similar to the ``inner constraints" often used in photogrammetry \cite{Triggs00}.

In Table \ref{YFCC50}, we have indicated the best performing method out of SA, SL, and SK. The specialized solvers S3, S4, and S5 that only optimize at one level are not considered in the comparison, as they are not guaranteed to converge to a global minimum, and are only there to provide insight into the relative performance in those different $SO(p)$ spaces.

\paragraph{Conclusions} For these small datasets, block-coordinate descent~\cite{Eriksson18cvpr-strongDuality,Eriksson19pami_duality} performs very well. Even so, Shonan averaging with Karcher mean is faster in several cases. One such example in this Table is the Statue of Liberty dataset with $n=39$ and $m=156$.  Interestingly, SK appears to be the fastest Shonan Averaging variant for some datasets, despite the fact that it contains an additional \textit{dense} term in the objective that involves \emph{all} of the poses.  We conjecture that for these relatively small datasets, the inclusion of the prior on the Karcher mean helps to promote faster convergence of the manifold optimization by penalizing components of the update step that lie in the subspace of (global) gauge symmetry directions for the rotation averaging problem. Intuitively, it discourages the step from having a component that does not ``actually change" the solution.  
This can be important in the context of a trust-region method like ours, where the \emph{total} length of the step is restricted at each iteration.  It would also be interesting to investigate what (if any) effect the inclusion of the Karcher mean has on the presence of suboptimal critical points at each level of the Riemannian Staircase, although we leave these questions for future work. 


Also clear is that starting Shonan Averaging with $p=3$, shown as SL in the table, is always either on par or much faster than SA. There is a simple explanation for this: in many instances, it is possible to recover a global minimizer from the optimization at the lowest level $p=3$. This can be appreciated by comparing with the results of S3: it rarely finds global minima \textit{every} time, but when it does it is obviously the fastest of all methods. In SL, we only move to the next $SO(p)$ level if that does \emph{not} happen, and \textbf{hence we get the best of both worlds: fast convergence if we happened to pick a lucky initial estimate, and upgrade to global optimality if not}. 

The S3, S4, and S5 lines are shown to indicate at what level this occurs, and for these datasets it is almost always at $p=4$. However, the results reveal that there are indeed no guarantees, so it is not a good idea to run Shonan Averaging at a single level: the Riemmannian Staircase provides the global guarantee but at minimal extra cost, as it is \emph{only} triggered when we converge to a suboptimal critical point at a lower level $p$.

\subsection{Intermediate-size Datasets ($50 \leq n < 150$)}
\begin{table*}[h!]
\footnotesize
\centerline{
\begin{tabular}{lcrrrrr}
\hline
 dataset                      &  method  &   error &   min &   avg &   max &   success \\
\hline
\hline
sistine\_chapel\_ceiling\_2    &    SA    &  0.000\% & 0.101 & 0.162 & 0.234 &      100\% \\
 (n=51, m=1670)               &    SL    &  0.000\% & 0.065 & \textbf{0.115} & 0.332 &      100\% \\
                              &    S3    &  0.000\% & 0.076 & 0.081 & 0.084 &       50\% \\
                              &    SK    & -0.000\% & 0.097 & 0.127 & 0.211 &      100\% \\
\hline
milan\_cathedral             &    SA    &  0.000\% & 0.175 & 0.203 & 0.220 &      100\% \\
 (n=69, m=2782)               &    SL    &  0.000\% & 0.078 & \textbf{0.092} & 0.105 &      100\% \\
                              &    SK    & -0.000\% & 0.170 & 0.190 & 0.206 &      100\% \\
\hline
reichstag                   &    SA    &  0.000\% & 0.166 & 0.291 & 0.399 &      100\% \\
 (n=71, m=2554)               &    SL    &  0.000\% & 0.073 & \textbf{0.084} & 0.108 &      100\% \\
                              &    SK    & -0.000\% & 0.179 & 0.193 & 0.215 &      100\% \\
\hline
piazza\_dei\_miracoli         &    SA    &  0.000\% & 0.261 & 0.353 & 0.593 &      100\% \\
 (n=74, m=3456)               &    SL    &  0.000\% & 0.092 & \textbf{0.119} & 0.147 &      100\% \\
                              &    S3    &  0.000\% & 0.102 & 0.119 & 0.146 &       90\% \\
                              &    SK    & -0.000\% & 0.260 & 0.344 & 0.413 &      100\% \\
\hline
ruins\_of\_st\_pauls           &    SA    &  0.000\% & 0.337 & 0.426 & 0.735 &      100\% \\
 (n=82, m=4998)               &    SL    &  0.000\% & 0.149 & \textbf{0.173} & 0.259 &      100\% \\
                              &    SK    & -0.000\% & 0.372 & 0.450 & 0.544 &      100\% \\
\hline
mount\_rushmore              &    SA    &  0.000\% & 0.264 & 0.366 & 0.511 &      100\% \\
 (n=83, m=4012)               &    SL    &  0.000\% & 0.105 & \textbf{0.123} & 0.193 &      100\% \\
                              &    SK    & -0.000\% & 0.263 & 0.306 & 0.352 &      100\% \\
\hline
london\_bridge\_3             &    SA    &  0.000\% & 0.113 & 0.141 & 0.243 &      100\% \\
 (n=88, m=1500)               &    SL    &  0.000\% & 0.050 & \textbf{0.070} & 0.095 &      100\% \\
                              &    S3    &  0.000\% & 0.052 & 0.066 & 0.084 &       90\% \\
                              &    SK    & -0.000\% & 0.115 & 0.128 & 0.143 &      100\% \\
\hline
palace\_of\_versailles\_chapel &    SA    &  0.000\% & 0.296 & 0.319 & 0.379 &      100\% \\
 (n=91, m=3964)               &    SL    &  0.000\% & 0.102 & \textbf{0.118} & 0.138 &      100\% \\
                              &    SK    & -0.000\% & 0.291 & 0.313 & 0.348 &      100\% \\
\hline
pieta\_michelangelo          &    SA    &  0.000\% & 0.540 & 0.709 & 1.051 &      100\% \\
 (n=93, m=7728)               &    SL    &  0.000\% & 0.194 & \textbf{0.220} & 0.228 &      100\% \\
                              &    SK    & -0.000\% & 0.538 & 0.559 & 0.614 &      100\% \\
\hline
blue\_mosque\_interior\_2      &    SA    &  0.000\% & 0.169 & 0.220 & 0.348 &      100\% \\
 (n=95, m=2288)               &    SL    &  0.000\% & 0.075 & 0.149 & 0.510 &      100\% \\
                              &    S3    &  0.000\% & 0.087 & \textbf{0.106} & 0.140 &       90\% \\
                              &    SK    & -0.000\% & 0.172 & 0.200 & 0.222 &      100\% \\
\hline
st\_vitus\_cathedral          &    SA    &  0.000\% & 0.592 & 0.909 & 1.306 &      100\% \\
 (n=97, m=8334)               &    SL    &  0.000\% & 0.250 & \textbf{0.291} & 0.326 &      100\% \\
                              &    S3    &  0.000\% & 0.252 & 0.298 & 0.387 &       90\% \\
                              &    SK    & -0.000\% & 0.571 & 0.650 & 0.805 &      100\% \\
\hline
\end{tabular}
}
\caption{YFCC results for $50 \leq n \leq 100$. Same format as Table \ref{YFCC50} but only showing S3-S5 results if they do not converge to global minima in \emph{all} tested cases.} 
\label{YFCC100}
\end{table*}

\begin{table*}[h!]
\footnotesize
\centerline{
\begin{tabular}{lcrrrrr}
\hline
 dataset                          &  method  &   error &   min &   avg &    max &   success \\
\hline
 \hline
big\_ben\_1                       &    SA    &  0.000\% & 0.196 & 4.084 & 30.563 &      100\% \\
 (n=101, m=1880)                  &    SL    &  0.000\% & 0.080 & \textbf{0.348} &  1.285 &      100\% \\
                                  &    S3    &  0.000\% & 0.105 & 0.127 &  0.199 &       70\% \\
                                  &    S4    &  0.000\% & 0.090 & 0.118 &  0.172 &       60\% \\
                                  &    S5    &  0.000\% & 0.077 & 0.140 &  0.197 &       80\% \\
                                  &    SK    &  0.000\% & 0.185 & 1.001 &  5.572 &      100\% \\
 \hline
london\_bridge\_2                 &    SA    &  0.000\% & 0.220 & 0.351 &  1.218 &      100\% \\
 (n=106, m=2742)                  &    SL    &  0.000\% & 0.091 & \textbf{0.114} &  0.136 &      100\% \\
                                  &    SK    & -0.000\% & 0.222 & 0.243 &  0.264 &      100\% \\
 \hline
palazzo\_pubblico                &    SA    &  0.000\% & 0.331 & 0.373 &  0.427 &      100\% \\
 (n=112, m=4420)                  &    SL    &  0.000\% & 0.155 & \textbf{0.262} &  0.926 &      100\% \\
                                  &    SK    & -0.000\% & 0.340 & 0.382 &  0.442 &      100\% \\
 \hline
london\_bridge\_1                 &    SA    &  0.000\% & 0.416 & 0.472 &  0.543 &      100\% \\
 (n=118, m=5690)                  &    SL    &  0.000\% & 0.168 & \textbf{0.203} &  0.285 &      100\% \\
                                  &    SK    & -0.000\% & 0.423 & 0.469 &  0.536 &      100\% \\
 \hline
national\_gallery\_london         &    SA    &  0.000\% & 0.186 & 0.233 &  0.351 &      100\% \\
 (n=124, m=2160)                  &    SL    &  0.000\% & 0.089 & \textbf{0.107} &  0.131 &      100\% \\
                                  &    SK    & -0.000\% & 0.179 & 0.210 &  0.246 &      100\% \\
 \hline
lincoln\_memorial                &    SA    &  0.000\% & 0.324 & 0.374 &  0.548 &      100\% \\
 (n=127, m=3516)                  &    SL    &  0.000\% & 0.119 & \textbf{0.135} &  0.185 &      100\% \\
                                  &    SK    & -0.000\% & 0.259 & 0.301 &  0.329 &      100\% \\
 \hline
grand\_central\_terminal\_new\_york &    SA    &  0.000\% & 0.414 & 0.584 &  1.043 &      100\% \\
 (n=132, m=5880)                  &    SL    &  0.000\% & 0.187 & \textbf{0.220} &  0.291 &      100\% \\
                                  &    SK    & -0.000\% & 0.528 & 0.586 &  0.661 &      100\% \\
 \hline
paris\_opera\_2                   &    SA    &  0.000\% & 0.809 & 0.888 &  0.982 &      100\% \\
 (n=133, m=10778)                 &    SL    &  0.000\% & 0.341 & \textbf{0.395} &  0.478 &      100\% \\
                                  &    SK    & -0.000\% & 0.837 & 0.906 &  1.058 &      100\% \\
\hline
\end{tabular}
}
\caption{YFCC results for $100 \leq n \leq 150$. Same layout as Table \ref{YFCC100}.} 
\label{YFCC150}
\end{table*}

In Tables \ref{YFCC100} and \ref{YFCC150} we show additional results on increasingly larger YFCC datasets, with exactly the same parameters as in the previous section. However, here we omit the S3-S5 variants \textit{unless} they do not converge to global minima in \emph{all} tested cases.

\paragraph{Conclusion} From these results \textbf{it is clear that SL starts to emerge as the best among the global optimization methods in terms of average running time}. Again, we observe that for these larger datasets it is rare \emph{not} to converge to a global minimizer at $p=3$, which is interesting in its own right. Of course, there are some exceptions, e.g., the Big Ben dataset in Table \ref{YFCC150} with $n=101$ and $m=1880$, for which global minimizers were \emph{not} found using only local search with $p \in \lbrace 3, 4, 5 \rbrace$.
The BD method is still competitive in cases where the number of measurements is large, as BD's running time is dominated by the number of images $n$, given that it is optimizing for a $3n\times 3n$ PSD matrix.

Results on these larger datasets suggest that finding global minima is actually \emph{harder} in general for small datasets than for larger, well connected datasets.
We can gain some theoretical insight into this empirical finding in light of several recent works \cite{Wilson2016rotations,Rosen16wafr-sesync,Boumal14ii} that have studied the  connection between graph-theoretic properties of the measurement network  $G = (n, \mathcal{E})$ that underpins the rotation averaging problem, and the statistical and geometric/computational properties of the resulting maximum-likelihood estimation  \eqref{eq:RA0}.  In a nutshell, these investigations indicate that \emph{both} the statistical properties of the maximum-likelihood estimation \eqref{eq:RA0} \emph{and} its computational hardness are controlled by the \emph{algebraic connectivity} $\lambda_2(L(G))$, i.e., the \emph{second}-smallest eigenvalue of the weighted graph Laplacian $L(G)$ associated with the graph $G$. Larger values imply \emph{both} a better (lower-uncertainty) estimate \emph{and} that the resulting relaxation \eqref{eq:SDP} is stronger.  For densely-connected measurement networks (of the kind that frequently appear in structure-from-motion applications), it is an elementary result from algebraic graph theory that this quantity can grow at a rate of up to $O(n)$. This provides insight into the observation that problems with larger measurement networks appear easier to solve, assuming a reasonably dense set of measurements.

\subsection{Larger Datasets ($n \geq 150$)}
\begin{table*}[h!]
\footnotesize
\centerline{
\begin{tabular}{lcrrrrr}
\hline
 dataset                             &  method  &   error &   min &   avg &    max &   success \\
\hline
\hline
st\_peters\_basilica\_interior\_2      &    SA    &  0.000\% & 0.948 & 1.165 &  1.738 &      100\% \\
 (n=173, m=11688)                    &    SL    &  0.000\% & 0.557 & \textbf{0.713} &  0.897 &      100\% \\
                                     &    S3    &  0.000\% & 0.531 & 0.587 &  0.715 &       80\% \\
\hline
pantheon\_interior                  &    SA    &  0.000\% & 0.852 & 1.001 &  1.570 &      100\% \\
 (n=186, m=10000)                    &    SL    &  0.000\% & 0.449 & \textbf{0.509} &  0.623 &      100\% \\
                                     &    S3    &  0.000\% & 0.455 & 0.530 &  0.642 &      100\% \\
\hline
florence\_cathedral\_dome\_interior\_1 &    SA    &  0.000\% & 2.435 & 2.848 &  3.226 &      100\% \\
 (n=213, m=31040)                    &    SL    &  0.000\% & 1.869 & \textbf{2.170} &  2.499 &      100\% \\
                                     &    S3    &  0.000\% & 1.856 & 2.209 &  2.814 &      100\% \\
\hline
paris\_opera\_1                      &    SA    &  0.000\% & 3.673 & 4.145 &  4.467 &      100\% \\
 (n=254, m=45754)                    &    SL    &  0.000\% & 1.686 & \textbf{2.024} &  2.677 &      100\% \\
                                     &    S3    &  0.000\% & 1.683 & 2.104 &  2.510 &      100\% \\
\hline
pike\_place\_market                  &    SA    &  0.000\% & 4.437 & 5.255 &  7.285 &      100\% \\
 (n=265, m=53242)                    &    SL    &  0.000\% & 2.019 & \textbf{2.322} &  3.400 &      100\% \\
                                     &    S3    &  0.000\% & 2.122 & 2.246 &  2.384 &      100\% \\
\hline
blue\_mosque\_interior\_1             &    SA    &  0.000\% & 3.581 & 4.153 &  4.964 &      100\% \\
 (n=272, m=40292)                    &    SL    &  0.000\% & 2.268 & \textbf{2.708} &  2.991 &      100\% \\
                                     &    S3    &  0.000\% & 2.511 & 2.997 &  3.645 &      100\% \\
\hline
notre\_dame\_rosary\_window           &    SA    &  0.000\% & 8.102 & 8.652 & 10.515 &      100\% \\
 (n=326, m=93104)                    &    SL    &  0.000\% & 3.478 & \textbf{4.074} &  4.690 &      100\% \\
                                     &    S3    &  0.000\% & 3.885 & 4.326 &  5.012 &      100\% \\
\hline
british\_museum                     &    SA    &  0.000\% & 4.297 & 4.851 &  7.846 &      100\% \\
 (n=344, m=45450)                    &    SL    &  0.000\% & 1.927 & \textbf{2.558} &  3.400 &      100\% \\
                                     &    S3    &  0.000\% & 2.009 & 2.558 &  3.911 &      100\% \\
\hline
palace\_of\_westminster              &    SA    &  0.000\% & 1.334 & 1.480 &  1.692 &      100\% \\
 (n=345, m=11522)                    &    SL    &  0.000\% & 0.598 & \textbf{0.837} &  1.242 &      100\% \\
                                     &    S3    &  0.000\% & 0.731 & 1.038 &  2.397 &      100\% \\
\hline
louvre                             &    SA    &  0.000\% & 2.872 & 3.141 &  3.709 &      100\% \\
 (n=367, m=26656)                    &    SL    &  0.000\% & 1.606 & \textbf{2.284} &  3.265 &      100\% \\
                                     &    S3    &  0.000\% & 1.648 & 2.058 &  2.495 &      100\% \\
\hline
st\_peters\_basilica\_interior\_1      &    SA    &  0.000\% & 5.283 & 5.654 &  6.295 &      100\% \\
 (n=365, m=55024)                    &    SL    &  0.000\% & 3.255 & \textbf{4.030} &  4.995 &      100\% \\
                                     &    S3    &  0.000\% & 3.662 & 4.169 &  4.728 &      100\% \\
\hline
st\_pauls\_cathedral                 &    SA    &  0.000\% & 7.385 & 7.861 &  8.384 &      100\% \\
 (n=370, m=83060)                    &    SL    &  0.000\% & 3.385 & \textbf{3.792} &  4.373 &      100\% \\
                                     &    S3    &  0.000\% & 3.420 & 4.176 &  6.689 &      100\% \\
\hline
westminster\_abbey\_1                &    SA    &  0.000\% & 4.238 & 4.582 &  4.957 &      100\% \\
 (n=501, m=38863)                    &    SL    &  0.000\% & 2.059 & \textbf{2.396} &  2.710 &      100\% \\
                                     &    S3    &  0.000\% & 2.209 & 2.693 &  3.385 &      100\% \\
\hline
pantheon\_exterior                  &    SA    &  0.000\% & 6.922 & 7.876 & 11.202 &      100\% \\
 (n=720, m=49256)                    &    SL    &  0.000\% & 3.798 & \textbf{5.247} &  6.765 &      100\% \\
                                     &    S3    &  0.000\% & 4.091 & 4.945 &  6.156 &      100\% \\
\hline
\end{tabular}
}
\caption{Comparing SA, SL and S3 (see text) on YFCC Datasets with $n \geq 150$.  In many cases S3 has a lower average computation time than SL, since it performs only local optimization at the \emph{lowest} level $p = 3$ of the Riemannian Staircase. However, in contrast to SL, S3 has \emph{no} guarantees regarding convergence to global minima.} 
\label{YFCCbig}
\end{table*}

Finally, in Table \ref{YFCCbig} we show additional results on the largest YFCC datasets with $n\geq 150$.
The block-coordinate descent method from ~\cite{Eriksson19pami_duality} did not converge in reasonable time for many of the larger datasets, which is because we use the minimum eigenvalue optimality certificate threshold $\lambda_{min}$ to establish convergence. The threshold we used in all experiments was $10^{-5}$, and for these datasets it takes a long time for BD to reach that level, in contrast to Shonan Averaging. 

\paragraph{Conclusion} For these large datasets convergence to the global minimizer occurs almost always at $p=3$ and hence the SL and S3 methods are basically identical in terms of operation and performance. However, SL comes with a global guarantee: in the rare case that S3 does not converge to the global minimizer at the $SO(3)$ level, SL will simply move up to $SO(4)$ and up, and thereby still recover the true maximum likelihood estimate.




\end{document}